\documentclass[submission,copyright,creativecommons]{eptcs}
\providecommand{\event}{FMAS 2026} 

\usepackage{iftex}

\ifpdf
  \usepackage{underscore}         
  \usepackage[T1]{fontenc}        
\else
  \usepackage{breakurl}           
\fi

\usepackage{graphicx}
\usepackage{xcolor}
\usepackage{cite}
\usepackage{amsmath,amssymb,amsfonts,amsthm}
\usepackage{algorithmic}
\usepackage{algorithm2e}
\usepackage{textcomp}
\usepackage{csquotes}
\usepackage{booktabs}
\usepackage{subcaption}
\usepackage{enumitem}
\usepackage{orcidlink}

\newtheorem{theorem}{Theorem}[section]
\newtheorem{corollary}[theorem]{Corollary}
\newtheorem{lemma}[theorem]{Lemma}
\newtheorem{remark}[theorem]{Remark}

\title{No Need to Stop the Fleet: \\Localized Anomaly Isolation via Dead Zones in AGV Fleets}
\author{Natalia Ogorelysheva\orcidlink{0009-0009-7092-275X}
\institute{TU Dortmund University\\ Dortmund, Germany}
\email{natalia.ogorelysheva} \email{@tu-dortmund.de}
\and
Falk Howar\orcidlink{0000-0002-9524-4459}
\institute{TU Dortmund University\\Dortmund, Germany}
\institute{Fraunhofer Institute for Software \\ and Systems Engineering ISST \\ Dortmund, Germany}
\email{falk.howar@tu-dortmund.de}
\and
Jian-Jia Chen\orcidlink{0000-0001-8114-9760}
\institute{TU Dortmund University\\Dortmund, Germany}
\institute{RWTH Aachen University\\Aachen, Germany}
\email{jian-jia.chen@tu-dortmund.de}
}

\newcommand{\titlerunning}{No Need to Stop the Fleet}
\newcommand{\authorrunning}{N. Ogorelysheva, F. Howar \& J.-J. Chen}

\hypersetup{
  bookmarksnumbered,
  pdftitle    = {\titlerunning},
  pdfauthor   = {\authorrunning},
  pdfsubject  = {EPTCS},               
  pdfkeywords = {multi-robot coordination, path planning, automated guided vehicles (AGVs), safety guarantees, fail-operational systems} 
}

\begin{document}
\maketitle

\begin{abstract}
Automated Guided Vehicle (AGV) fleets in production and logistics environments rely on a system-wide emergency stop to handle local anomalies such as malfunctions, hazards, or contaminated areas. While safe, this approach halts the entire fleet even when only a small area is affected, causing unnecessary downtime. This paper proposes \emph{dead zones} --- a formally defined localized isolation approach that serves as an additional fail-operational mechanism for situations where a full system-wide halt is not strictly needed.
Rather than shutting down the entire fleet, a dead zone isolates the anomalous area, allowing unaffected robots to continue operating normally. 
We introduce two operational strategies: basic dead zone handling, in which affected robots enter a dead zone and stop, and dead zone handling with escaping, in which robots that are sufficiently far from a dead zone reroute around it.
For escaping on counterclockwise trajectories we prove collision-freedom with a bounded arrival delay for all escaping robots, provided a minimum pairwise separation is kept between robots. Experimental evaluation confirms the theoretical guarantees and characterizes the throughput tradeoffs between all three approaches: full emergency stop, basic dead zone handling, and dead zone handling with escaping.

\end{abstract}


\section{Introduction}
\label{sec:intro}

AGV fleets operating in production and logistics environments are subject to strict safety regulations. The European standard EN~ISO~3691-4:2023 mandates emergency stop functionality on every driverless industrial truck: when triggered, all motion must cease immediately and a manual restart is required~\cite{iso3691_2023}. In practice, a local anomaly such as a malfunctioning robot, a fallen object, or a contaminated area triggers a system-wide halt, bringing the entire fleet to a standstill regardless of how many robots are actually affected. As fleets scale to hundreds of vehicles~\cite{liu2024efficient}, the cost of such system-wide halts grows proportionally~\cite{siemens2024downtime}.

Looking ahead, emerging applications such as automated facilities in remote or resource-constrained environments, where robots are scarce, replacement is costly, and continuous operation is critical, make this problem even more acute. In such settings, losing even a single robot to a local anomaly may be unacceptable, and a system-wide emergency stop may not be a viable option at all~\cite{zhang2023autonomous, goldsmith2022end}.

This motivates a more targeted approach: containing the anomaly within a designated area, a \textit{dead zone}, and fencing it off from the remainder of the operational space, as an additional fail-operational mechanism for cases where a full system-wide halt is not strictly necessary. Robots outside the dead zone continue operating normally; only those inside or heading towards it need to respond. Since AGV fleets typically rely on centralized offline planning~\cite{fragapane2021amr, okumura2023otimapp}, their ability to react dynamically is limited, making a pre-defined handling strategy essential. When a dead zone is announced, a robot has essentially three options depending on its position and remaining trajectory: enter the zone and stop, perform an escape maneuver, or continue if unaffected.

To the best of our knowledge, no existing work formally defines a localized isolation zone for offline-planned multi-robot systems, nor provides safety guarantees for the remaining fleet. Note, that dead zone and it's handling are distinct from general collision avoidance. In our model we assume the system is already collision-free due to used planning method. Additionally, instead of proposing a dynamic collision avoidance due to a specific anomaly, we propose an efficient dynamic rerouting based on static rule with safety guarantees.
The contributions of this paper are as follows:

\begin{enumerate}[label=(\arabic*)]
    \item We introduce the \emph{dead zone} (DZ), a formal model for localized anomaly isolation in offline-planned multi-robot systems, and provide a formalization of DZ handling based on multi-agent path finding (MAPF).
    \item We define two DZ handling strategies: Basic DZ Handling, in which affected robots enter DZ and stop there, and DZ Handling with Escaping, in which robots sufficiently far from the zone reroute around it via a counterclockwise escape strategy.
    \item We provide a formal proof of collision-freedom for the counterclockwise escape strategy, covering both cell and swap collisions, with a bounded arrival delay for all escaping robots.
    \item We evaluate all strategies experimentally, confirming the theoretical guarantees and characterizing the throughput tradeoffs between safety and operational continuity.
\end{enumerate}

\noindent\textbf{Outline.}
The remainder of this paper is structured as follows. Section~\ref{sec:motiv_example} introduces the dead zone concept through a motivational example. Section~\ref{sec:dead_zones} presents the formal model, including the system model, dead zone definition, and basic dead zone handling. Section~\ref{sec:escaping} introduces the escaping strategy. Section~\ref{sec:evaluation} describes the experimental setup and presents the results, before we discuss related works in Section~\ref{sec:related}. Section~\ref{sec:conclusion} concludes the paper with a summary and an outlook on future work.

\section{Motivational Example}
\label{sec:motiv_example}

\begin{figure}[t!]
    \centering
    \begin{subfigure}[t]{0.48\textwidth}
        \includegraphics[width=0.9\textwidth]{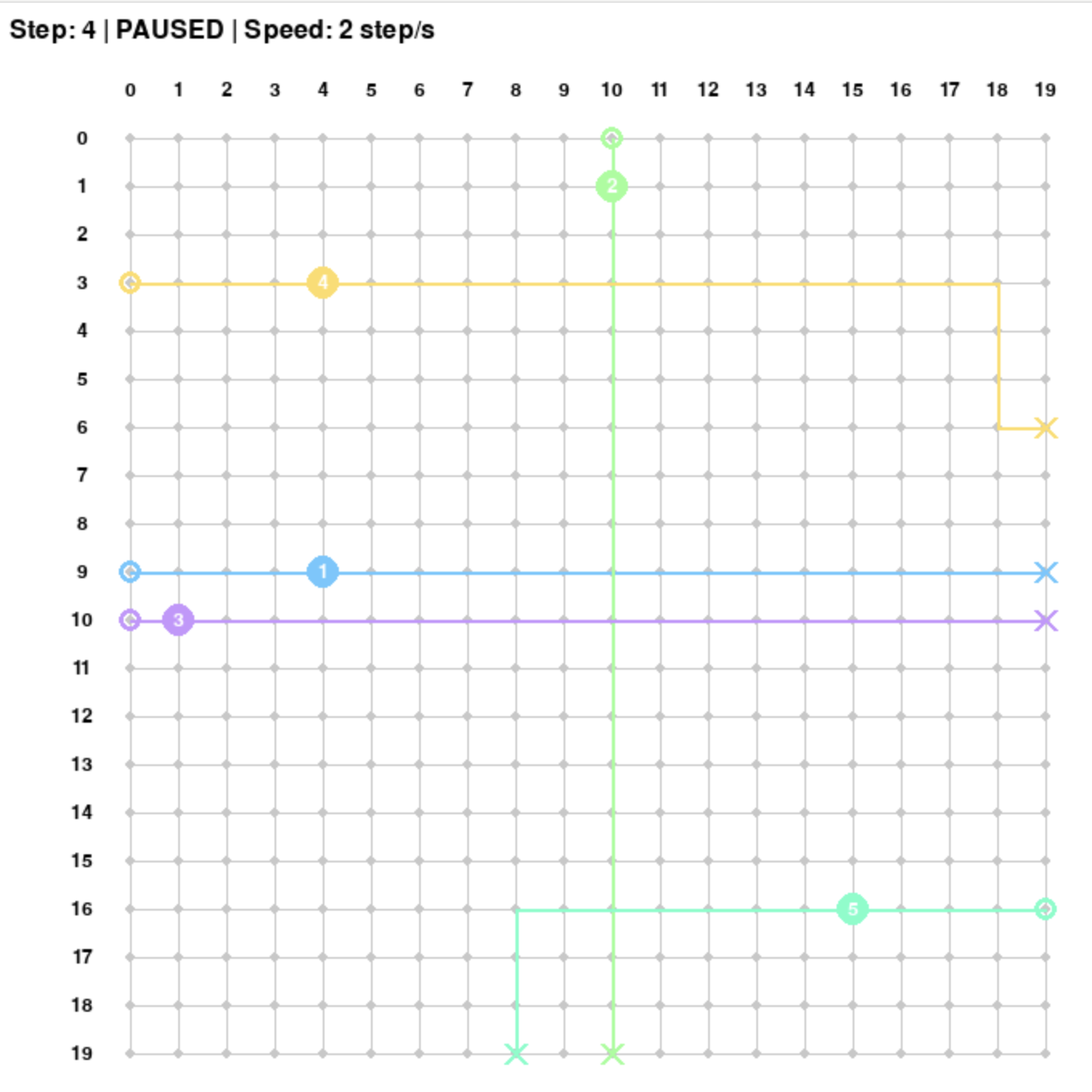}
        \caption{Normal operation ($t=4$). Robots $r_1$, $r_2$, $r_3$ have paths crossing the center region.
        All five robots moving along their planned paths without interference.}
        \label{fig:motiv_ex_robots_moving_no_DZ}
    \end{subfigure}
    \hfill
    \begin{subfigure}[t]{0.48\textwidth}
        \includegraphics[width=0.9\textwidth]{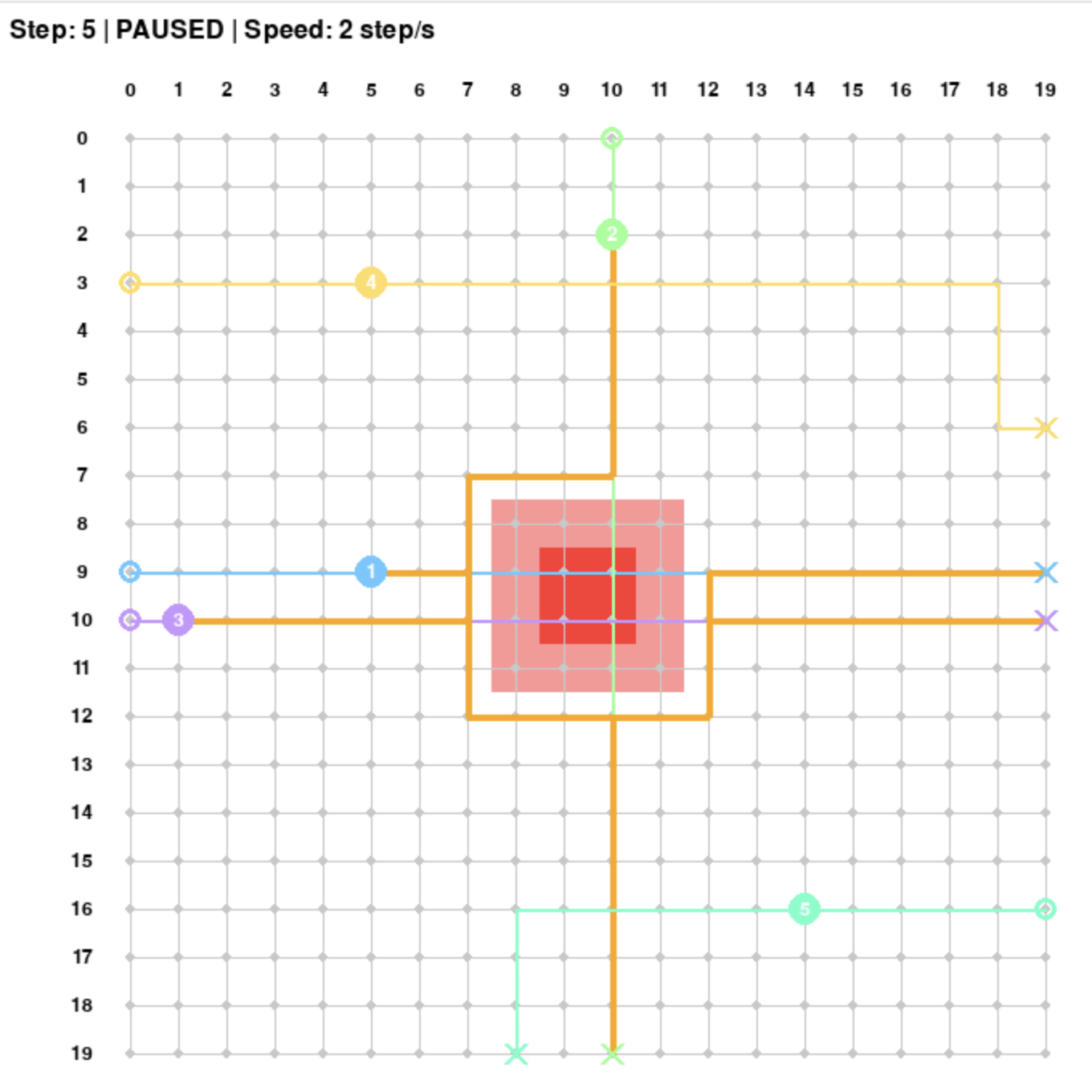}
        \caption{Dead zone announced at $t_{dz}=5$: robots $r_1$, $r_2$, $r_3$ execute counterclockwise escape around DZ (escape paths shown in orange); $r_4$, $r_5$ continue unaffected. 
        }
        \label{fig:motiv_ex_DZ}
    \end{subfigure}

    \vspace{0.5em}

    \begin{subfigure}[t]{0.48\textwidth}
        \includegraphics[width=0.9\textwidth]{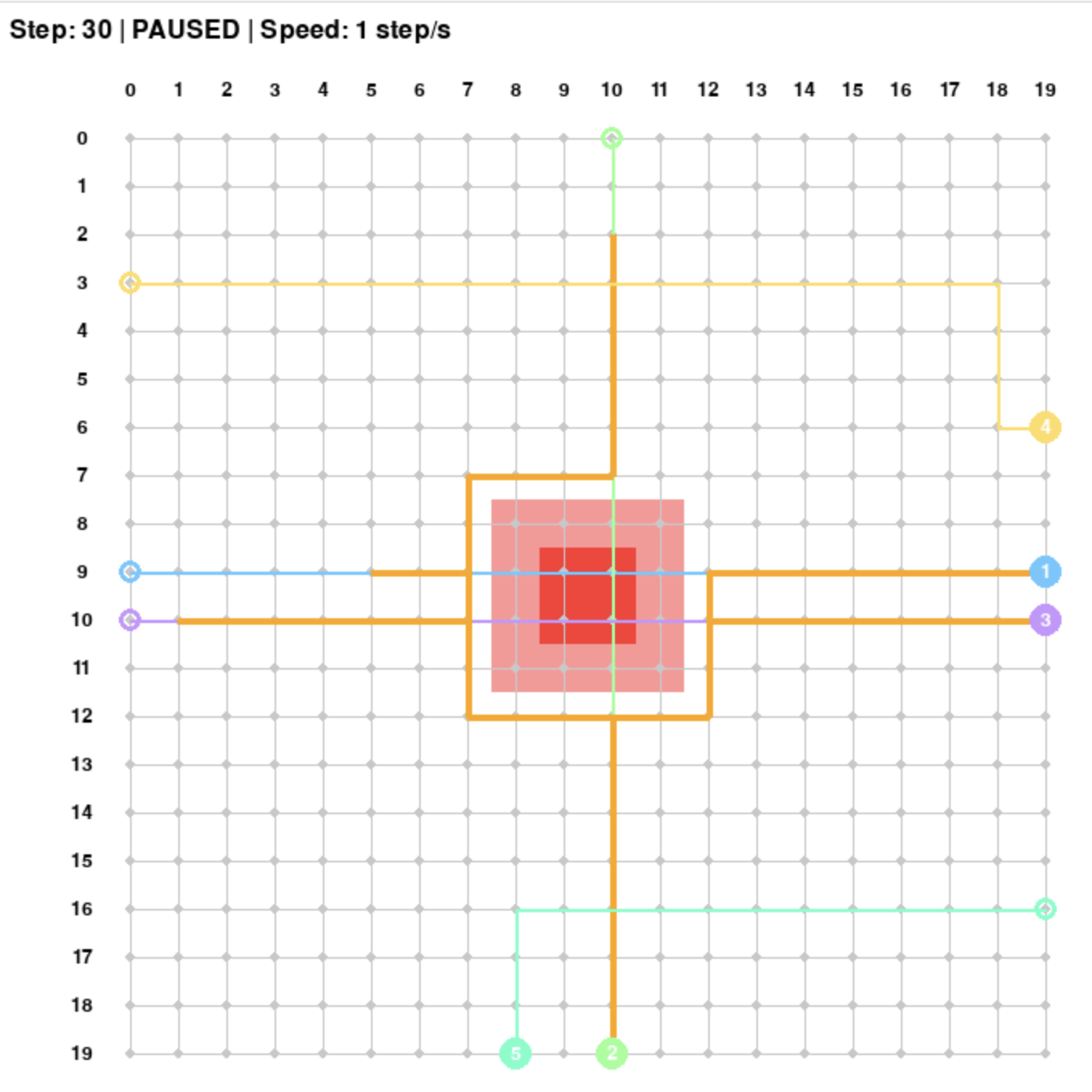}
        \caption{All robots reach their goals ($t=30$): the escape detour caused a bounded delay for $r_1$, $r_2$, $r_3$. No collisions, no losses.
         }
        \label{fig:motiv_ex_robots_finished}
    \end{subfigure}
    \hfill
    \begin{subfigure}[t]{0.48\textwidth}
        \includegraphics[width=0.9\textwidth]{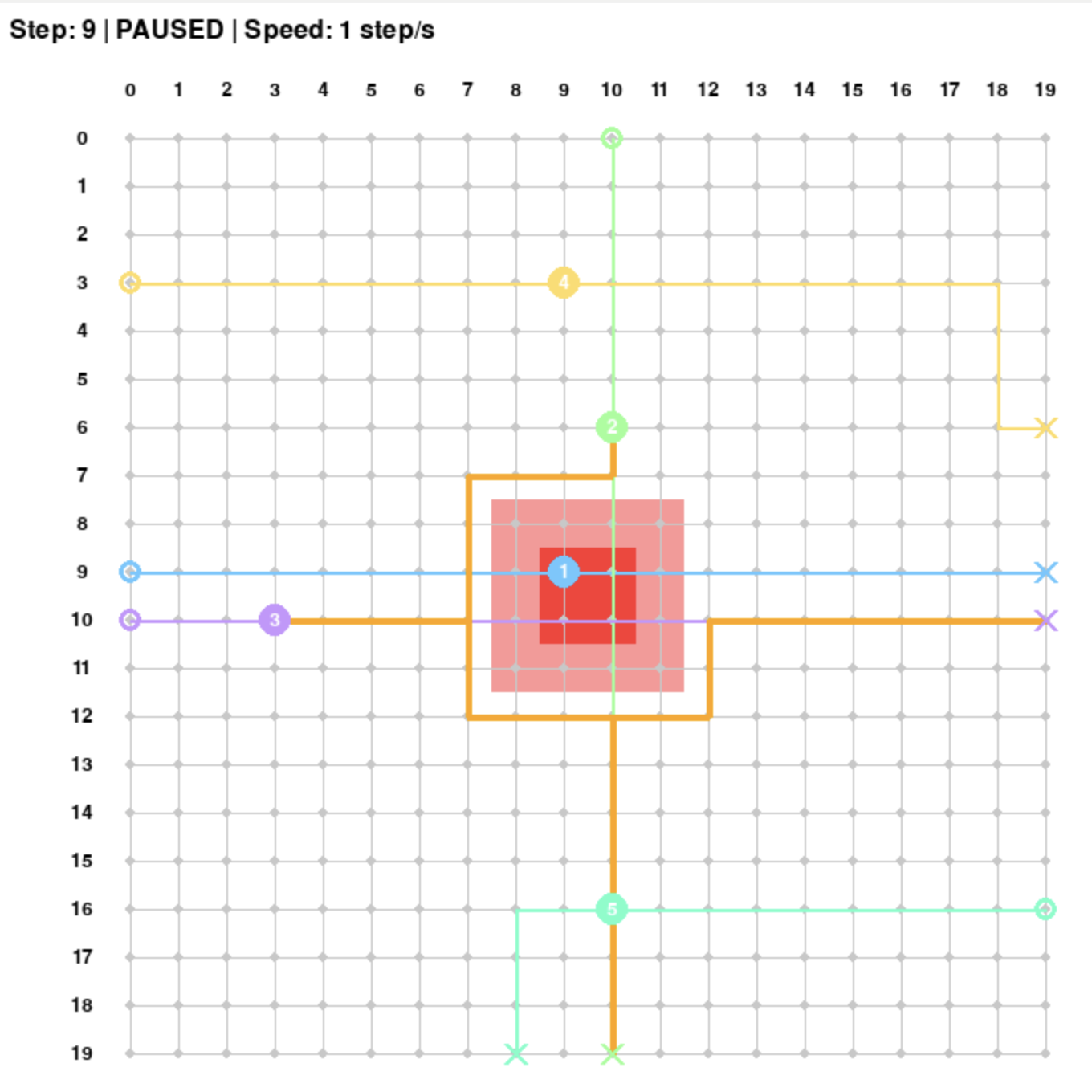}
        \caption{Late dead zone announcement ($t_{dz}=9$): robot $r_1$ is in the dead zone at announcement and considered dead. 
        }
        \label{fig:motiv_ex_alt_robot_dead}
    \end{subfigure}
    
    \caption{Five robots operating on a grid before and after a dead zone is announced. 
    }
    \label{fig:motiv_ex}
\end{figure}

We demonstrate the idea of a dead zone with a small example.
Consider a multi-robot system operating on a discrete grid (with 4-connectivity as in \cite{sharon2015cbs}), where robots continuously execute transport jobs. Fig.~\ref{fig:motiv_ex_robots_moving_no_DZ} shows five robots during normal operation at $t=4$: robots $r_1$, $r_2$, and $r_3$ follow planned paths crossing the center of the grid, while robots $r_4$ and $r_5$ operate in separate corridors well away from the center. 
All robots follow their pre-planned, collision-free trajectories without interference.

At $t_{dz} = 5$, a dead zone is announced at the center of the grid. The system must immediately decide how each robot should respond. Robots $r_4$ and $r_5$ are unaffected, their paths do not intersect the dead zone, and they continue operation without interruption. Robots $r_1$, $r_2$, and $r_3$, however, have remaining trajectories that are affected. Since all three are sufficiently far from the dead zone at $t_{dz}$, each executes an escape strategy. In this example we use one of the possible escape strategies, the counterclockwise bypassing strategy: a robot follows its original path up to the boundary of the DZ, traverses the safe perimeter counterclockwise, and rejoins its original trajectory on the other side. Fig.~\ref{fig:motiv_ex_DZ} shows the resulting escape paths immediately after the dead zone announcement.

An important requirement for any escape strategy is that it must guarantee collision-freedom --- among the escaping robots themselves, which in this case share a common resource (the escape route along the perimeter of the dead zone) and between the escaping robots and the remainder of the unaffected fleet. For instance, in Fig.~\ref{fig:motiv_ex_DZ}, robot $r_5$ continues on its original path while $r_2$ executes an escape maneuver; both share a grid cell $c=(10,16)$ at different time points. The escape strategy must ensure that such interactions remain collision-free.
Fig.~\ref{fig:motiv_ex_robots_finished} shows the system at $t=30$: all five robots have completed their jobs and reached their goal positions. The escape detour introduces a small delay for each escaping robot, but no robot is lost and no collision occurs.

For contrast, Fig.~\ref{fig:motiv_ex_alt_robot_dead} shows what happens when the dead zone is announced later, at $t_{dz}=9$. Robot $r_1$ happens to be inside the DZ at the moment of the announcement and is kept there until the anomaly is acted upon or resolved. By design the dead zone marks an area where the system can no longer guarantee safe operation. The nature of the anomaly may be entirely unknown and any robot inside it may itself be in an uncontrolled or unpredictable state. Allowing such a robot to leave the DZ and re-enter the operational area would risk introducing an uncontrolled agent into an otherwise collision-free operational area. The robot is therefore kept inside the dead zone and fenced off from the remainder of the system, ensuring that the anomaly remains localized and that the operating fleet can continue safely.

Note that the dead zone itself is independent of any particular robot response --- it merely defines the fenced area. How robots react to it, whether by entering and stopping or by escaping around it, is determined separately by the chosen handling strategy.

\section{Multi-Robot Systems, Dead Zones, and Safety}
\label{sec:dead_zones}

In this section, we formalize the concept of the dead zones.
A dead zone imposes three key constraints on the system. First, {spatial isolation}: no operating robot should enter a dead zone, and any robot already inside must remain there. Second, {operational continuity}: robots outside of a dead zone must be able to continue their tasks without interference from the anomaly. Third, {safety}: any dead zone handling strategy must maintain collision-freedom for the entire operating fleet, both among robots responding to the dead zone and between them and unaffected robots.

\noindent\textbf{Multi-robot Systems.}
We follow the standard MAPF formalism~\cite{sharon2015cbs, surynek2020bounded}. The system operates in discrete time $t \in [0, \mu]$, where $\mu$ denotes the time point at which the last robot finishes its job. The operational environment is a grid of $n \times m$ cells, where each cell $c_p$ is identified by coordinates $(u,v)$, $0 \leq u < n$, $0 \leq v < m$. Two cells are adjacent if and only if they differ by exactly 1 in exactly one coordinate (4-connectivity as in~\cite{sharon2015cbs}). The set of all cells is denoted $C$, with $\vert C\vert  = n \cdot m$.
A fleet consists of robots $R = \{r_1, \ldots, r_k\}$. 

The path $P(r_i) = [c_0^i, c_1^i, \ldots, c_+^i]$ of a robot $r_i$ is a time-independent sequence of distinct cells from start to goal, with $c_0^i$ being its start and $c_+^i$ its goal position. Applying a path planning algorithm (e.g.~\cite{sharon2015cbs}) to $P(r_i)$ yields the planned trajectory $\sigma(r_i) = [\sigma_{t_0}(r_i), \ldots, \sigma_{t_+}(r_i)]$, a time-dependent sequence that assigns each cell from start to goal a specific time point, including possible waiting steps; we write $\sigma_t(r_i) \in C$ for the planned position of $r_i$ at time $t$. Further, we denote $\sigma^{\geq t}(r_i)$ the remaining trajectory of robot $r_i$ at time point $t$. 

\noindent\textbf{Dead Zones.}
A \emph{dead zone} (DZ) is a designated area within an operational environment where the presence of robots is restricted or controlled due to safety concerns, anomalies, or potential hazards. The primary purpose of a dead zone is to protect the system and its components from further anomalies or accidents, while ensuring continued operation with limited resources. Formally, a dead zone is defined as
$DZ = \langle C_{dz},\, t_{dz} \rangle$
where $t_{dz} \in \mathbb{N}_0$ denotes the time point at which the dead zone is announced to the system, and $C_{dz} \subseteq C$ is the set of cells occupied by the dead zone, defined as the rectangular region
$C_{dz} = \{c_p \in C \mid x \leq u < x + w,\; y \leq v < y + h\}$
with $(x,y)$ denoting the top-left corner of the dead zone and $w \times h$ its dimensions. 

Note, a dead zone has no defined end time in our model yet. In practice, the resolution of anomaly may take long time, and the contaminated area might be isolated for hours. We therefore model only the announcement time $t_{dz}$ to focus on the dead zone concept itself as well as it's handling. Modeling dead zone resolution, and the eventual safe reintegration of the isolated robots into active fleet operation, once the anomaly is resolved, is a distinct problem left to future work.

After the announcement of a dead zone at $t_{dz}$, the system computes updated, or DZ-handling, trajectories $\hat{\sigma}(r_i) = [\hat{\sigma}_{t_0}(r_i), \ldots, \hat{\sigma}_{t_+}(r_i)]$ for each robot, details of which are given in this Section and in Section~\ref{sec:escaping}. We write $\hat{\sigma}_t(r_i) \in C$ for the planned position at time $t$ of the DZ-handling trajectory. 

We are interested in localized anomalies that can be contained by small dead zones and assume dead zones that cover less than $1\%$ of the operational area in this work. Larger dead zones can impose increasingly severe constraints on path planning and fleet throughput, as we analyze in Section~\ref{sec:evaluation}. 
In the extreme case where $C_{dz} = C$, the dead zone concept would reduce to a complete emergency stop of the system.

\noindent\textbf{System Safety.}
We assume that before the announcement of a dead zone, the system guarantees collision-free operation between robots on the planned trajectories. Two types of collisions are distinguished: a \emph{cell collision} occurs when two robots occupy the same cell at the same time, and a \emph{swap collision} occurs when two robots traverse the same edge in opposite directions between two consecutive time points. This fundamental safety property ensures that at any time $t \in [0, \mu]$, no two robots are planned to occupy the same cell or swap positions:
\begin{equation}
\label{eq:collision_free_assume}
\begin{split}
& \forall t \in [0, \mu], ~ \forall r_i \neq r_j \in R: \\
& \qquad \sigma_t(r_i) \neq \sigma_t(r_j) \quad \wedge \quad 
(\sigma_t(r_i) \neq \sigma_{t+1}(r_j) ~ \wedge ~ 
\sigma_{t+1}(r_i) \neq \sigma_t(r_j))
\end{split}
\end{equation}

Our goal in this paper is to develop strategies which ensure that also after replanning, no collisions occur, unless robots are inside the dead zone. 
The actual position of robot $r_i$ at time $t$ is denoted $\alpha_t(r_i)$ and satisfies:
\[
\alpha_t(r_i) =
\begin{cases}
\sigma_t(r_i), & \text{if } t \leq t_{dz},\\
\hat{\sigma}_t(r_i), & \text{if } t > t_{dz},
\end{cases}
\]
for all robots $r_i \in R$ and all time points $t$.
Our goal is then to show that
\begin{equation}
\label{eq:collision_free}
\begin{split}
& \forall t \in [0, \mu], ~ \forall r_i \neq r_j \in R: 
(t \leq t_{dz} ~ \lor ~ (\alpha_t(r_i) \notin C_{dz} \land \alpha_t(r_j) \notin C_{dz})) ~ \Rightarrow \\
& \qquad \alpha_t(r_i) \neq \alpha_t(r_j) \quad \wedge \quad 
(\alpha_t(r_i) \neq \alpha_{t+1}(r_j) ~ \wedge ~ 
\alpha_{t+1}(r_i) \neq \alpha_t(r_j))
\end{split}
\end{equation}
Another fundamental safety property of DZ handling is that, once a robot enters a dead zone, it must remain there:
\begin{equation}
\label{eq:robot_in_dz_alt}
\begin{split}
 \forall r_i \in R, ~ \forall t \in [t_{dz}, \mu]:
\quad \alpha_t(r_i) \in C_{dz} ~ \Rightarrow 
~ \alpha_{t+1}(r_i) \in C_{dz}
\end{split}
\end{equation}

The proof obligation for dead zone handling strategies, i.e., plans for $\hat{\sigma}(r_i)$ is then to show that, assuming Eq.~\ref{eq:collision_free_assume}, the definition of $\hat{\sigma}(r_i)$ together entail Eq.~\ref{eq:collision_free} and Eq.~\ref{eq:robot_in_dz_alt}. Since Eq.~\ref{eq:collision_free_assume} covers time points up $t_{dz}$, proofs can focus on $t \geq t_{dz}$. 

\noindent\textbf{Reacting to Dead Zones.}
The challenge introduced by the dead zones is to maintain collision-freedom even after $t_{dz}$, when some robots must alter their original trajectories to avoid or enter the dead zone.
When a dead zone is announced at time $t_{dz}$, the system must determine how each robot responds based on whether its remaining trajectory $\sigma^{\geq t_{dz}}(r_i)$ intersects $C_{dz}$. 
We define several cases that result in a fleet strategy for reacting to the DZ.
Unaffected robots, whose remaining trajectories do not intersect with $C_{dz}$ at all, continue on their original path. That is, for robots whose trajectories do not intersect with the dead zone, the DZ-handling trajectory coincides with the original planned trajectory, and the robot continues to follow it unchanged:\footnote{
We slightly abuse notations and write $\sigma^{\geq t_{dz}}(r_i) ~ \cap ~ C_{dz} \neq \emptyset$ for $\{ c \in \sigma^{\geq t_{dz}} ~\vert ~ c \in C_{dz} \}$}
\begin{equation}
\label{eq:unaffected_r_alt}
\begin{split}
\hat{\sigma}(r_i) ~=_{def}~ \sigma(r_i) \qquad \mbox{for } ~ r_i \in R ~ \mbox{ with }
~ \sigma^{\geq t_{dz}}(r_i) ~ \cap ~ C_{dz} = \emptyset.
\end{split}
\end{equation}

If a robot's remaining path is affected by the DZ, the simplest way for the system to react is to let the affected robot drive normally until it reaches the dead zone and stop there. With this, if a robot's remaining trajectory from time $t_{dz}$ intersects DZ, its goal is reassigned to be within it.
For robot $r_i \in R$ with $\sigma^{\geq t_{dz}}(r_i) ~ \cap ~ C_{dz} \neq \emptyset$ let $t'$ be the earliest time at or after $t_{dz}$ for which 
$\sigma_{t'} \in C_{dz}$. Then, we define $\hat{\sigma}$ as:
\begin{equation}
\label{eq:basic_dz_affected_enters_arena}
\hat{\sigma}_t(r_i) =_{def}
\begin{cases}
\sigma_t(r_i) & \mbox{ for } t < t' \\
\sigma_{t'}(r_i) & \mbox{ for } t \geq t' \\
\end{cases}
\end{equation}
This trivial strategy should already affect much fewer robots than a global emergency stop. In the next subsection, we turn to more advanced strategy for escaping or avoiding dead zones. We call the strategy that implements Eq.~\ref{eq:robot_in_dz_alt}, Eq.~\ref{eq:unaffected_r_alt}, and Eq.~\ref{eq:basic_dz_affected_enters_arena} \textbf{Basic DZ Handling}.
The safety of this simple strategy follows directly from these definitions: Eq.~\ref{eq:unaffected_r_alt} keeps original safe trajectories for robots that do not enter the dead zone, guaranteeing Eq.~\ref{eq:collision_free} for these robots and the definition in Eq.~\ref{eq:basic_dz_affected_enters_arena} makes robots stop upon entering the dead zone, guaranteeing Eq.~\ref{eq:robot_in_dz_alt} and Eq.~\ref{eq:collision_free} for the remaining robots.


\section{Escaping Dead Zones}
\label{sec:escaping}

Robots that are sufficiently far from a dead zone at the time of its announcement may execute an escape strategy that reroutes them around it. The decision depends on a safety threshold $\theta \in \mathbb{N}$: a robot must be at least $\theta$ cells away from $C_{dz}$ at $t_{dz}$ to have sufficient time and space to redirect before reaching the dead zone boundary. In practice, $\theta$ should additionally account for planning latency and communication delays; in our discrete unit-speed model both are assumed negligible, so $\theta = 1$ suffices. In a physical deployment, $\theta$ would need to be adjusted accordingly. 

In essence, many escape strategies can be conceived: a robot could return to its starting position and re-plan from there, wait at a safe cell, or bypass the dead zone entirely. In general, an escape strategy is a function that transforms a robot's remaining planned trajectory to avoid the dead zone. For the set $\mathcal{T}$ of all possible trajectories and all possible dead zones $\mathcal{DZ}$, let
$E_n: \mathcal{T} \times \mathcal{DZ} \to \mathcal{T}$
be a function that maps a planned trajectory $\sigma(r_i)$ and a dead zone $DZ$ to an escape trajectory $\hat{\sigma}(r_i)$ given $\sigma^{\geq t_{dz}}(r_i) \cap C_{dz} \neq \emptyset$, such that $\hat{\sigma}_t(r_i) = \sigma_t(r_i)$ for $t \leq t_{dz}$ and $\hat{\sigma}^{\geq t_{dz}}(r_i) \cap C_{dz} = \emptyset$. Of course, escape trajectories generated by $E_n$ should ensure the safety property stated in Eq.~\ref{eq:collision_free}.
The escape mechanisms, including the escape route definition, entry and exit points of a counterclockwise escape strategy, are presented in the following, before analyzing conditions for maintaining safety under the escape strategy.

\subsection{Counterclockwise Escape Trajectories}
In this paper we concentrate on bypassing strategies, which route robots around 
dead zones along a safe perimeter, analogous to how people naturally navigate around a hazardous area by following its boundary until they can continue towards their destination.
Among the many possible bypassing strategies, we define and analyze the counterclockwise (CCW) escape strategy $E_{ccw}$, which leads a robot around an obstacle by following its boundary in counterclockwise direction without stopping. We choose this direction arbitrarily, as it is simple to implement; our collision-freedom analysis for the counterclockwise strategy applies unchanged to a clockwise variant.
The perimeter $P_e$ around the dead zone is defined as:
\begin{equation}
\label{eq:escape_route_arena}
P_e = \{c_u \in C \setminus C_{dz}\ \mid \exists c_v \in C_{dz}: 
\vert c_u - c_v\vert _\infty = 1\ \}
\end{equation}
This represents the full bounding ring of cells surrounding the dead zone, corners included, but excluding the dead zone itself. On a grid, the perimeter is connected, i.e., a robot can move along
the perimeter.

For a robot $r_i$ with $\sigma^{\geq t_{dz}}(r_i) \cap C_{dz} \neq \emptyset$, we define the entry and exit points as the cells where the robot transitions onto and away from the escape route:

\begin{equation}
\label{eq:entry_exit_combined_arena}
\begin{split}
t_{in}^i = \min\{t > t_{dz} \mid \sigma_{t-1}(r_{i}) \notin P_e ~ \land ~ \sigma_{t}(r_i) \in P_e\} & \quad \mbox{and} \quad
c_{in}^i = \sigma_{t_{in}^i}(r_i) \\
t_{out}^i = \max\{t > t_{dz} \mid \sigma_t(r_i) \in P_e ~\land ~ \sigma_{t+1}(r_i) \notin P_e\} & \quad \mbox{and} \quad
c_{out}^i = \sigma_{t_{out}^i}(r_i)
\end{split}
\end{equation}
where $t_{in}^i$ is the time when robot $r_i$ enters the escape route, $c_{in}^i$ is the corresponding entry cell, $t_{out}^i$ is the time when the robot exits the escape route, and $c_{out}^i$ is the exit cell.
Let then $\sigma_e(r_i) = [c_{in}^i, \ldots, c_{out}^i]$ be the trajectory of $r_i$ that travels along the perimeter of the dead zone in CCW direction from $c_{in}^i$ to $c_{out}^i$, beginning at time $t_{in}^{i}$ at unit speed without waiting.

A robot can take a detour or shortcut by taking the escape route and will in general not arrive at $c_{out}^i$ at time $t_{out}^i$. We analyze how this
affects system safety in the next section. For now,
we focus on defining the possible delay (or speed-up) and describing the complete escaping strategy. 
Let $\vert \sigma_e(r_i)\vert  \leq \vert P_e\vert $ be the length of the 
segment on the perimeter. When a robot uses $\sigma_e(r_i)$ instead of its original route from $c_{in}^i$ to $c_{out}^i$, 
it will be delayed or sped up by 
$\delta_i = \vert \sigma_e(r_i)\vert  - (t_{out}^i - t_{in}^i) -1$.\footnote{We subtract 1 as $\sigma_e(r_i)$ contains all cells on the paths but $(t_{out}^i - t_{in}^{i})$ is the number of steps.}
 
\begin{equation}
\label{eq:ccw_escape_strategy_arena}
\hat{\sigma}_t(r_i) =_{def}
\begin{cases}
\sigma_t(r_i) & \mbox{ for } t < t_{in}^i \\
\sigma_{e,t}(r_i) & \mbox{ for } t_{in}^i \leq t \leq t_{out}^i + \delta_i \\
\sigma_{t-\delta_i}(r_i) & \mbox{ for } t > t_{out}^i + \delta_i
\end{cases}
\end{equation}
where we assume $\sigma_{t-\delta_i}(r_i) = \sigma_{\mu}(r_i)$ for $t-\delta_i > \mu$. 

\noindent\textbf{The Complete Strategy.} 
We apply escaping only conditionally as (1)
we assume that some time is needed to execute the
change of plan and (2) escaping creates 
deviations from originally planned positions 
over time that may lead to collisions.
We introduce a maximal delay bound $\delta_{max}$
and prove collision freedom under some 
constraints for a concrete $\delta_{max}$ in the 
next subsection.
Formally, the DZ-handling trajectory $\hat{\sigma}(r_i)$ for robots whose remaining trajectory intersects $C_{dz}$ is constructed depending on the robot's distance from the dead zone at $t_{dz}$. If $dist(\alpha_{t_{dz}}(r_i), C_{dz}) \geq \theta$, the robot has sufficient time and space to reroute, and $\hat{\sigma}(r_i)$ is constructed tentatively as the CCW escape trajectory in Eq.~\ref{eq:ccw_escape_strategy_arena}. 
If $\vert \delta_i\vert  \leq \delta_{max}$, then $\hat{\sigma}(r_i)$ is enacted.
Otherwise, if
$dist(\alpha_{t_{dz}}(r_i), C_{dz}) < \theta$
or if $\vert \delta_i\vert  > \delta_{max}$, the robot has insufficient time to reroute safely or might violate the distance with other robots and the system falls back to $\hat{\sigma}(r_i)$ as constructed in Eq.~\ref{eq:basic_dz_affected_enters_arena} --- the robot follows its original trajectory until it enters $C_{dz}$ and stops there.

The two handling strategies, \emph{Basic DZ Handling} and \emph{DZ Handling with Escaping}, define how the fleet responds to a Dead Zone announcement. While Basic DZ Handling is trivially safe by construction, the collision-freedom of the escaping strategy is not obvious and requires formal analysis.  

\begin{remark}[Computational Cost per Dead Zone Event]
Let $k = \vert R\vert$ be the fleet size and $L$ the length of the longest remaining trajectory at $t_{dz}$. The system's reaction to a dead zone is a single loop over the fleet: for each robot $r_i$, check $\sigma^{\geq t_{dz}}(r_i)$ for intersection with $C_{dz}$ ($O(L)$); if there are none, $r_i$ is unaffected (Eq.~\ref{eq:unaffected_r_alt}). If an intersection is found, $r_i$ is classified as dead (Eq.~\ref{eq:basic_dz_affected_enters_arena}) or escaping in $O(1)$, using the Complete Strategy above, evaluated on data already produced by the check. A dead robot's trajectory is truncated in $O(1)$. An escaping robot's route is found by locating its entry and exit points (Eq.~\ref{eq:entry_exit_combined_arena}) and walking $P_e$ once, bounded by $\vert \sigma_e(r_i)\vert  \leq \vert P_e\vert $, costing $O(\vert P_e\vert )$ (Eq.~\ref{eq:ccw_escape_strategy_arena}). Each robot therefore costs $O(L + \vert P_e\vert )$, giving a total reaction cost of $O\big(k \cdot (L + \vert P_e\vert )\big)$. In practice, this bound is conservative, as typically only a few robots in the fleet are affected by the dead zone.
\end{remark}

\subsection{Collision-Freedom Guarantees for Escaping}
\label{sec:problem_definition}

We now establish collision-freedom guarantees for the presented escaping strategy. We analyze one specific setup: counterclockwise unidirectional movement with no waiting on the escape route.
The question we aim to answer is: can we impose constraints on the initial planning of $\sigma$ that ensures collision freedom under the presented escaping strategy, making it \textit{safe by design}. 
While there could be many such constraints, we focus on the minimum distance that has to hold at all times between robots in the initial planning in order for escape maneuvers not leading to collisions.
Intuitively, if two robots are far enough apart, a (short) detour or shortcut will not suffice to cover that distance. We formalize this argument in the remainder of this section.

We start by establishing an upper bound on the maximal spatial deviation that a robot $r_i$ can experience under the escaping strategy's $\hat{\sigma}(r_i)$, i.e., a bound on $dist(\hat{\sigma}_t(r_i), \sigma_t(r_i))$.

\begin{lemma}[Maximum Deviation Bound]
\label{lem:max_deviation}
Any robot that is not in the dead zone deviates from its original planned position by at most $max(\delta_{max},|P_e|)$:
\begin{equation*}
\forall r_i \in R \, \forall t \in [0, \mu]: ~ 
\hat{\sigma}_t(r_i) \notin C_{dz} ~ \Rightarrow ~
dist(\hat{\sigma}_t(r_i), \sigma_t(r_i)) ~ \leq ~ max(\delta_{max},|P_e|). 
\end{equation*}
\end{lemma}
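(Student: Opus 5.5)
The plan is a case analysis on the type of robot and, for escaping robots, on which of the three phases of Eq.~\ref{eq:ccw_escape_strategy_arena} the time point $t$ falls into. I take $dist$ to be the grid (Manhattan) distance induced by 4-connectivity. The key fact I will use throughout is that a robot moving at unit speed changes its distance to any fixed or moving target by at most one per step.

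\textbf{Trivial cases.}
\begin{itemize}
\item \emph{Unaffected robots} (Eq.~\ref{eq:unaffected_r_alt}) satisfy $\hat\sigma=\sigma$, so the deviation is $0$.
\item \emph{Robots under Basic DZ Handling} (Eq.~\ref{eq:basic_dz_affected_enters_arena}) also follow $\sigma$ for $t<t'$. For $t\ge t'$ they sit in $C_{dz}$, so the premise $\hat\sigma_t(r_i)\notin C_{dz}$ fails and nothing needs to be shown.
\item \emph{Escaping robots before entry}: for $t<t_{in}^i$ again $\hat\sigma_t=\sigma_t$.
\end{itemize}

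\textbf{Phase 2: $t_{in}^i\le t\le t_{out}^i+\delta_i$.} Here the robot is on $P_e$, and I compare $\hat\sigma_t(r_i)=\sigma_{e,t}(r_i)$ with $\sigma_t(r_i)$.

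\emph{Sub-case $t\le t_{out}^i$.} The robot's original position $\sigma_t(r_i)$ lies on its original route between $c_{in}^i$ and $c_{out}^i$, which passes through $C_{dz}$ or near the perimeter. Both positions were at $c_{in}^i$ at time $t_{in}^i$ and each has moved at most $t-t_{in}^i$ steps since. I would rather bound the deviation via the perimeter geometry. Every cell of $P_e$ and of $C_{dz}$ lies within the bounding box of the ring, whose Manhattan diameter is at most $(w+1)+(h+1)\le |P_e|/2+\text{const}$, which is below $|P_e|$. However, the original route between $t_{in}^i$ and $t_{out}^i$ may leave the ring, since $P_e$ is entered first and exited last but could be left in between. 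I would therefore use the unit-speed argument instead: $\hat\sigma_t$ and $\sigma_t$ each move at most one cell per step from the common point $c_{in}^i$. Combined with the fact that $\sigma_e$ has length at most $|P_e|$, this gives $dist\le \min(2(t-t_{in}^i),\ldots)$. I then need to refine this to $|P_e|$.

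\emph{Cleanest route.} Note $dist(\hat\sigma_t,\sigma_t)\le dist(\hat\sigma_t,c_{out}^i)+dist(c_{out}^i,\sigma_t)$. The first term is the remaining perimeter length. The second is at most the remaining original steps $t_{out}^i-t$. Summing gives roughly $(|\sigma_e|-1-(t-t_{in}^i))+(t_{out}^i-t)$. Together with the symmetric bound through $c_{in}^i$, namely $(t-t_{in}^i)+(t-t_{in}^i)$, the minimum of the two should be at most $|P_e|$ once $|\delta_i|$ is controlled. I expect this sub-case to be the main obstacle: reconciling the two bounds to obtain exactly $\max(\delta_{max},|P_e|)$ rather than a sum. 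I would first try averaging the two triangle-inequality bounds.

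\emph{Sub-case $t_{out}^i<t\le t_{out}^i+\delta_i$.} Here $\sigma_t$ is already past $c_{out}^i$, and the same triangle inequality through $c_{out}^i$ applies.

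\textbf{Phase 3: $t>t_{out}^i+\delta_i$.} Here $\hat\sigma_t=\sigma_{t-\delta_i}$, a pure time shift of the original trajectory (clamped at the goal). Since $\sigma$ moves at unit speed, $dist(\sigma_{t-\delta_i},\sigma_t)\le|\delta_i|\le\delta_{max}$, where the last inequality is the enactment condition of the Complete Strategy. The clamping at $\mu$ only shortens the shift, so the bound is preserved.

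Taking the maximum over all cases gives the claimed bound.
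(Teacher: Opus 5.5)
Your trivial cases (unaffected robots, Basic DZ and fallback robots, the pre-entry phase) and your Phase~3 time-shift argument are correct and match the paper. Your triangle inequality through $c_{out}^i$ for $t_{out}^i < t \le t_{out}^i+\delta_i$ is also valid; it even gives the sharper bound $\delta_i$.

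\textbf{The gap.} The sub-case $t_{in}^i \le t \le t_{out}^i$ is the core of the lemma, and you leave it unresolved. Write $m = t - t_{in}^i$, $L = |\sigma_e(r_i)|$ and $D = t_{out}^i - t_{in}^i$, so that $\delta_i = L-1-D$. Your two estimates are $2m$ (through $c_{in}^i$) and $L-1+D-2m$ (through $c_{out}^i$). Averaging them gives $(L-1+D)/2 = L-1-\delta_i/2$.

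\begin{itemize}
\item When $\delta_i \ge 0$ this suffices, since then it is at most $L-1<|P_e|$.
\item When the escape is a shortcut ($\delta_i<0$), it gives $L-1+|\delta_i|/2$. This happens, e.g., when the original route waits or loops between $c_{in}^i$ and $c_{out}^i$.
\end{itemize}

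In the shortcut case no combination of your two estimates works, because both depend only on $m, L, D$. Take a $1\times 1$ dead zone ($|P_e|=8$) with $L=8$ and $D=15$. Then $\delta_i=-8$, which is admissible for $\delta_{max}=8$. At $m=5$ your estimates are $10$ and $12$, both above $\max(\delta_{max},|P_e|)=8$.

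\textbf{The missing idea.} This is exactly the paper's argument: $\sigma_e(r_i)$ is an arc of the closed cycle $P_e$, which has length $|P_e|$.
\begin{itemize}
\item After $m$ steps along the arc, the robot could return to $c_{in}^i$ by continuing counterclockwise for $|P_e|-m$ more steps. Hence $dist(\hat{\sigma}_t(r_i), c_{in}^i) \le |P_e|-m$, not merely $\le m$.
\item The original trajectory moves at unit speed, so $dist(\sigma_t(r_i), c_{in}^i) \le m$.
\item The triangle inequality then gives $dist(\hat{\sigma}_t(r_i),\sigma_t(r_i)) \le |P_e|$.
\end{itemize}
This holds on the entire escape phase $t_{in}^i \le t \le t_{out}^i+\delta_i = t_{in}^i+L-1$, where $m \le L-1 < |P_e|$. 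It needs neither a split at $t_{out}^i$ nor any control of $\delta_i$. The enactment condition $|\delta_i|\le\delta_{max}$ is only used in Phase~3.
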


\begin{proof}
For all robots that are not affected by DZ, we have $\hat{\sigma}(r_i) = \sigma(r_i)$. For robots that enter the dead zone $dist(\hat{\sigma}_t(r_i), \sigma_t(r_i)) = 0$ for $t \leq t_{in}^i$ and then $\hat{\sigma}_t(r_i) \in C_{dz}$ for $t>t_{in}^i$.
Then, for an escaping robot $r_i$, its trajectory is defined in Eq.~\ref{eq:ccw_escape_strategy_arena} as $\hat{\sigma}_t(r_i) = \sigma_t(r_i)$ for $t < t_{in}^i$ and as $\hat{\sigma}_t(r_i) = \sigma_{t^+\delta_i}(r_i)$ for $t > t_{out}^i + \delta_i$, guaranteeing that $dist(\hat{\sigma}_t(r_i), \sigma_t(r_i)) \leq \delta_{max}$ for these times.
For $t_{in}^i \leq t \leq t_{out}^i + \delta_i$, the robot $r_i$ travels along the perimeter of the dead zone that is a closed loop of length $|P_e|$. We have to analyze by how much $\hat{\sigma}(r_i)$ can deviate from $\sigma(r_i)$ in this time. After $m\leq |P_e|$ steps, the escaping route must be able to return to $c_{in}^{i}$ in $|P_e|-m$ steps, so it can be at most $|P_e|-m$ away from $c_{in}^{i}$. In the worst case, the original trajectory can be at most $m$ away from $c_{in}^{i}$. By the triangle inequality,
$dist(\hat{\sigma}_t(r_i), \sigma_t(r_i)) \leq (|P_e|-m)+m = |P_e|$ for $t_{in}^{i} \leq t \leq t_{out}^i + \delta_i \leq t_{in}^{i} + |P_e|$.
Together, these cases establish the lemma. 
\end{proof}

The maximal delay $\delta_{max}$ impacts the maximum deviation directly. We introduced $\delta_{max}$ as without further assumptions on the initial path planning a robot's path theoretically may cross a dead zone multiple times and take long detours in the meantime. Path planning algorithms such as CBS~\cite{sharon2015cbs} aim to produce shortest trajectories but it is not guaranteed in general and may force robots into longer detours that loop around other robots. Under an unfortunately placed dead zone, such a detour may cross $C_{dz}$ more than once and be shortened considerably during escape. Such cases are rare outliers. Most of the time, we expect the escape route to be a short detour from a path through the dead zone. Hence, setting $\delta_{max} \leq |P_e|$ does not affect many robots.
We can now establish the following safety guarantee, under the following assumptions: a single rectangular dead zone announced at one time point, unit-speed robots with no waiting on the CCW escape route, no online replanning during escape, and a dead zone sufficiently far from the grid boundary that its perimeter $P_e$ is fully traversable (no edge cases).

\begin{theorem}[Collision-Free Escape with Bounded Delay]
\label{thm:collision_free_escape}
If $\delta_{max} \leq |P_e|$ and robots on their original collision-free trajectories maintain minimum pairwise separation $d_{safe} = 2|P_e| + 1$ at any point of time, then no collisions occur during or after escape around a dead zone announced at time point $t_{dz}$.
\end{theorem}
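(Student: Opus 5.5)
The plan is to reduce everything to Lemma~\ref{lem:max_deviation} plus the separation assumption, via the triangle inequality. It suffices to consider $t \geq t_{dz}$, since for $t \le t_{dz}$ we have $\alpha_t = \sigma_t$ and Eq.~\ref{eq:collision_free_assume} applies directly. I would first fix two distinct robots $r_i, r_j$ and a time $t > t_{dz}$ such that neither $\alpha_t(r_i)$ nor $\alpha_t(r_j)$ lies in $C_{dz}$, because pairs with a robot inside the dead zone are exempt in Eq.~\ref{eq:collision_free}. Eq.~\ref{eq:robot_in_dz_alt} holds by construction: dead robots stop at $\sigma_{t'}(r_i) \in C_{dz}$ (Eq.~\ref{eq:basic_dz_affected_enters_arena}), and escaping robots never enter $C_{dz}$, since $P_e \cap C_{dz} = \emptyset$ and the original trajectory outside $[t_{in}^i, t_{out}^i]$ avoids $C_{dz}$ by the maximality of $t_{out}^i$ and the minimality of $t_{in}^i$. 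So the real work is Eq.~\ref{eq:collision_free}.

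For the cell-collision part, I would apply Lemma~\ref{lem:max_deviation} to each robot, using $\delta_{max} \le |P_e|$ so that $\max(\delta_{max},|P_e|) = |P_e|$. This gives $dist(\alpha_t(r_i),\sigma_t(r_i)) \le |P_e|$ and likewise for $r_j$. The separation hypothesis gives $dist(\sigma_t(r_i),\sigma_t(r_j)) \ge 2|P_e|+1$. By the triangle inequality (taking $dist$ to be the grid/Manhattan metric, or any metric in which the lemma and separation are stated consistently),
\[
dist(\alpha_t(r_i),\alpha_t(r_j)) \;\ge\; 2|P_e|+1 - 2|P_e| \;=\; 1,
\]
so the two cells differ.

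For the swap collisions, a swap requires $\alpha_t(r_i) = \alpha_{t+1}(r_j)$. Since robots move at unit speed, $dist(\alpha_{t+1}(r_j),\alpha_t(r_j)) \le 1$, and the bound above gives only $dist(\alpha_t(r_i),\alpha_{t+1}(r_j)) \ge 0$, which is not enough. Here I expect the main obstacle. My first attempt would be to avoid the inequality chain and argue directly that a swap implies $dist(\alpha_t(r_i),\alpha_t(r_j)) = 1$ together with both robots outside $C_{dz}$ at times $t$ and $t+1$. The latter holds because a robot outside at $t+1$ was outside at $t$ (Eq.~\ref{eq:robot_in_dz_alt} contrapositive), and a swap partner's cell at $t+1$ equals the other's cell at $t$, which is outside. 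Then I would sharpen the deviation bound during the perimeter phase. One option is to note that Lemma~\ref{lem:max_deviation}'s bound is attained only at the extremes $m=0$ or $m=|P_e|$, where one of the two terms vanishes. The other is to use the slack in the strict inequality $d_{safe} = 2|P_e|+1$ together with a parity argument on the 4-connected grid: on a bipartite grid, the parity of Manhattan distance between two unit-speed movers is preserved or changes by an even amount, so a distance of exactly $1$ at time $t$ arising from a difference of $2|P_e|+1$ in planned positions forces a contradiction with the exact-equality case in the triangle inequality. If neither works cleanly, the fallback is to observe that a swap would give a cell collision between $\alpha(r_i)$ and a one-step-shifted $\alpha(r_j)$, and to absorb the extra step by applying the lemma at both $t$ and $t+1$ and using the separation at both times. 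I would also check whether the intended proof simply treats the $+1$ in $d_{safe}$ as covering this step.

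Finally, I would treat the mixed cases explicitly: one robot unaffected (zero deviation) with one escaping, both escaping, and dead robots before entering $C_{dz}$ (zero deviation, since they follow $\sigma$). All of these are covered uniformly because Lemma~\ref{lem:max_deviation} holds for every robot outside $C_{dz}$. The bounded-delay claim follows directly from the enactment rule $|\delta_i| \le \delta_{max} \le |P_e|$ in the Complete Strategy.
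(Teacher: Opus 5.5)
Your reduction to $t \ge t_{dz}$ with both robots outside $C_{dz}$, and your cell-collision argument (Lemma~\ref{lem:max_deviation} for each robot plus the triangle inequality, giving $dist(\alpha_t(r_i),\alpha_t(r_j)) \ge 1$), match the paper's. The gap is the swap case. You correctly flag it as the obstacle but leave it unresolved, and none of your three repairs works.

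(a) The bound in Lemma~\ref{lem:max_deviation} is $(|P_e|-m)+m = |P_e|$ for \emph{every} $m$, so no term vanishes. Moreover, after leaving the perimeter a robot really does lead or lag its plan by $|\delta_i|$, which can be as large as $\delta_{max} = |P_e|$.

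(b) On $\mathbb{Z}^2$ the parity of the Manhattan distance is additive, $dist(a,c) \equiv dist(a,b) + dist(b,c) \pmod 2$. So the extremal configuration $|P_e| + 1 + |P_e| = 2|P_e|+1$ is parity-consistent, and parity cannot produce a contradiction. Also, planned trajectories (and hence post-exit ones) may contain waiting steps, so the relative parity of two robots is not invariant.

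(c) Applying the lemma at $t$ and $t+1$ only yields $dist(\sigma_t(r_i),\sigma_t(r_j)) \le 2|P_e|+1$, which meets $d_{safe}$ rather than violating it. The $+1$ in $d_{safe}$ has already been spent on the cell-collision bound. In short, a spatial deviation bound evaluated at one fixed time has zero slack for swaps when $d_{safe} = 2|P_e|+1$.

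The paper closes the case with two ingredients you do not use.

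First, it does not treat all pairs uniformly. Suppose at time $t$ one robot is still on its plan: unaffected, a dead robot before entering, or an escaping robot before $t_{in}$. Then the triangle inequality gives $dist(\alpha_t(r_i),\alpha_t(r_j)) \ge |P_e|+1 \ge 2$. A swap needs adjacency at time $t$, so it is already excluded. Only pairs in which both robots currently deviate remain; your ``uniform'' treatment of the mixed cases discards exactly this information.

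Second, for the remaining pairs the paper argues temporally rather than spatially. Each escaping robot is treated as its plan delayed or advanced by at most $|P_e|$. Separation plus unit speed force the two robots' planned visits to the swap cell to be at least $2|P_e|+1$ steps apart. Subtracting the two shifts still leaves the later robot at least one step too late to cross the edge between $c_a$ and $c_b$ in the opposite direction at the same moment. Because the comparison is made along the plans rather than at a fixed time, the two shifts partly cancel; this is precisely the slack your bound throws away.

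If you take this route, note that the time-shift view is literally valid only once a robot has left the perimeter. The other both-deviating cases follow from two observations. Two robots simultaneously on the perimeter both move CCW along the same cycle, so they cannot use an edge in opposite directions. A perimeter robot cannot swap with a post-exit robot, because the latter never revisits $P_e$ (by maximality of $t_{out}$).
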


\begin{proof}
\textbf{Preliminary case.} At any $t \geq t_{dz}$, suppose at most one of $r_i, r_j$ deviates from its original trajectory --- i.e. one robot is escaping while the other has not yet started or does not need to escape, so its deviation from the original trajectory is $0$. By the triangle inequality:$dist(\sigma_t(r_i),\sigma_t(r_j)) \leq dist(\alpha_t(r_i),\alpha_t(r_j)) + |P_e| + 0$. Rearranging, and using $dist(\sigma_t(r_i),\sigma_t(r_j)) \geq d_{safe} = 2|P_e|+1$:
\begin{equation}
dist(\alpha_t(r_i),\alpha_t(r_j)) \geq (2|P_e|+1) - |P_e| = |P_e|+1 > 0
\end{equation}
Since $|P_e|+1 > 1$, this rules out both a cell collision ($dist=0$) and a swap collision (which requires the robots to be adjacent, $dist=1$, at consecutive timesteps). With that, the worst case can only occur, when both robots deviate from their original trajectory, i.e. both escaping.

\textbf{Worst case.} Suppose a collision occurs between two robots $r_i$ and $r_j$, meaning they either occupy the same cell at some time $t \in [t_{dz}, \mu]$, i.e., $\alpha_t(r_i) = \alpha_t(r_j)$, or swap cells between two consecutive time points $t$ and $t+1$, i.e., $\alpha_t(r_i) = \alpha_{t+1}(r_j)$ and $\alpha_{t+1}(r_i) = \alpha_t(r_j)$. We consider both cases in turn.

\textit{Cell collision.} Suppose $\alpha_t(r_i) = \alpha_t(r_j)$. Since the CCW escape strategy deviates each robot from its original trajectory by at most $|P_e|$ as a consequence of Lemma~\ref{lem:max_deviation} and $\delta_{max} \leq |P_e|$, the escaping robot's position at time $t$ differs from its originally planned position by at most $|P_e|$. Therefore, by the triangle inequality:
\begin{equation}
dist(\sigma_t(r_i), \sigma_t(r_j)) \leq 
dist(\alpha_t(r_i), \alpha_t(r_j)) + 2|\sigma_e| = 
0 + 2|P_e| = 2|P_e|
\end{equation}
This means the original trajectories had pairwise separation strictly less than $2|\sigma_e| + 1 = d_{safe}$ at time $t$, contradicting our assumption. Hence no cell collision can occur.

\textit{Swap collision.} The original planned trajectories are swap-free by Eq.~\ref{eq:collision_free_assume}. We show that the escape maneuver cannot introduce a swap.
Suppose $r_i$ and $r_j$ would swap at cells $c_a$ and $c_b$, with $r_i$ visiting $c_a \to c_b$ and $r_j$ visiting $c_b \to c_a$. On the original trajectories, let $r_i$ be the first to visit these cells, arriving at $c_a$ at time $t_1$ and $c_b$ at $t_1+1$, and $r_j$ arriving at $c_b$ at time $t_2$ with $t_2 \geq t_1 + 2|P_e|+1$ by the $d_{safe}$ constraint. The worst case occurs when $r_i$ delays and $r_j$ accelerates, as the reverse combination only widens the time gap between them. 
After escape, $r_i$ delays by at most $|P_e|$, so it visits $c_b$ at most at time $t_1 + 1 + |P_e|$. Meanwhile, $r_j$ accelerates by at most $|P_e|$, so it visits $c_b$ at earliest at time $t_2 - |P_e| \geq t_1 + |P_e| + 1$. For a swap to occur, $r_j$ must be at $c_b$ while $r_i$ is still at $c_a$, i.e.\ at time $t_1 + |P_e|$ at the latest. But $r_j$ cannot reach $c_b$ before $t_1 + |P_e| + 1$; it is always at least one time step too late. Hence no swap collision can occur under $d_{safe} = 2|P_e|+1$.
\end{proof}

This means that we can establish a collision freedom
guarantee if we know a bound on the length of the
perimeter of an occurring dead zone and can assume that
only one dead zone is announced in the considered time span. 
Since $d_{safe}$ grows directly with $|P_e|$, larger dead zones impose increasingly severe separation constraints and quickly limit fleet throughput, as we analyze in Section~\ref{sec:evaluation}. 
Better guarantees may be obtained by considering potential dead zones during initial planning.

Turning from safety to performance, the following corollary follows from the lemma and theorem immediately.

\begin{corollary}[Bounded Delay]
\label{thm:collision_free_delay}
For $\delta_{max} \leq |P_e|$, every escaping robot $r_i$ arrives at its destination with delay $|\delta_i| \leq |P_e|$.
\end{corollary}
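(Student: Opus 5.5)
The plan is to read the corollary directly off the construction of the Complete Strategy, and to use Lemma~\ref{lem:max_deviation} and Theorem~\ref{thm:collision_free_escape} only to make sure the escape trajectory is actually executed as defined.

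\textbf{Step 1: escaping robots satisfy the delay bound by construction.} Fix a robot $r_i$ whose trajectory is enacted as an escape trajectory. Under the Complete Strategy, the tentative CCW trajectory of Eq.~\ref{eq:ccw_escape_strategy_arena} is enacted only if $|\delta_i| \leq \delta_{max}$; otherwise $r_i$ falls back to Basic DZ Handling and is not counted as escaping. So every escaping robot has $|\delta_i| \leq \delta_{max} \leq |P_e|$ under the hypothesis. As an independent sanity check, I would also note that $|\sigma_e(r_i)| \leq |P_e|$ and $t_{out}^i - t_{in}^i \geq 0$. These give $\delta_i \leq |P_e|-1$ even without the cap. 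Only a speed-up, which arises when the original route between $c_{in}^i$ and $c_{out}^i$ was long, needs the $\delta_{max}$ cap.

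\textbf{Step 2: the delay bound becomes an arrival-time bound.} By the third case of Eq.~\ref{eq:ccw_escape_strategy_arena}, for $t > t_{out}^i + \delta_i$ we have $\hat{\sigma}_t(r_i) = \sigma_{t-\delta_i}(r_i)$. In particular, the goal $c_+^i$, originally reached at $t_+$, is reached at $t_+ + \delta_i$. This uses the convention $\sigma_{t-\delta_i}(r_i)=\sigma_\mu(r_i)$ beyond $\mu$, and the fact that the goal lies after $t_{out}^i$, since $t_{out}^i$ is the last exit from $P_e$. Hence the arrival delay equals $\delta_i$, with $|\delta_i| \leq |P_e|$.

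\textbf{Step 3: the robot actually follows $\hat{\sigma}(r_i)$.} The trajectory is executed without interruption only if no collision forces a stop or replan. Theorem~\ref{thm:collision_free_escape} guarantees collision-freedom under the same hypothesis $\delta_{max} \leq |P_e|$ together with the $d_{safe}$ separation, so $\alpha_t(r_i)=\hat{\sigma}_t(r_i)$ for all $t > t_{dz}$.

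I expect the main subtlety to be Step 2. Specifically, I would need to confirm that the goal cell does not fall within the perimeter segment, i.e.\ that $t_+ > t_{out}^i$. If the goal lies on $P_e$ itself, arrival happens during the escape phase, and I would then bound the arrival time directly by $t_{in}^i + |\sigma_e(r_i)| - 1 \leq t_{in}^i + |P_e|$, relative to the original arrival time, which is at least $t_{in}^i$.
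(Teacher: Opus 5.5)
Your argument is correct, but it does not follow the paper's route. The paper's only justification is that the corollary ``follows from the lemma and theorem immediately'': with $\delta_{max}\leq|P_e|$, the deviation bound $\max(\delta_{max},|P_e|)$ of Lemma~\ref{lem:max_deviation} becomes $|P_e|$, and Theorem~\ref{thm:collision_free_escape} is cited alongside it.

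You bypass Lemma~\ref{lem:max_deviation} entirely. You take $|\delta_i|\leq\delta_{max}$ directly from the enactment condition of the Complete Strategy. You then use the third case of Eq.~\ref{eq:ccw_escape_strategy_arena} to show that the arrival time at the goal is shifted by exactly $\delta_i$. This is more elementary and fits the claim better. Lemma~\ref{lem:max_deviation} bounds the spatial quantity $dist(\hat{\sigma}_t(r_i),\sigma_t(r_i))$, and a spatial bound does not by itself bound arrival time. The paper's ``immediately'' therefore tacitly relies on the same time-shift structure that you make explicit. Your observation that a positive delay satisfies $\delta_i\leq|P_e|-1$ without any cap, so that only speed-ups need $\delta_{max}$, is a refinement the paper does not state. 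What the paper's framing offers is an explicit link to the safety regime of Theorem~\ref{thm:collision_free_escape}.

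Your Step~3 is unnecessary in the paper's model, because there $\alpha_t(r_i)=\hat{\sigma}_t(r_i)$ for $t>t_{dz}$ holds by definition. Invoking Theorem~\ref{thm:collision_free_escape} at that point would also import the separation hypothesis $d_{safe}=2|P_e|+1$, which the corollary does not assume. Steps~1 and~2 alone already prove the statement as written.
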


While individual robots are not delayed significantly by small dead zones, we expect the safety envelope $d_{safe}$ to limit how many robots can operate effectively on a grid.

\section{Evaluation}
\label{sec:evaluation}

We evaluate the different strategies for handling dead zones in a series of simulated experiments.
We use a grid with $50 \times 50$ cells in which border cells (1 cell wide) are reserved exclusively as robot start/goal positions and may not be traversed. On this grid we operate a varying number of robots that each perform a sequence of transport jobs. The inner grid ($48 \times 48$) is used for path planning and driving, where we use CBS~\cite{sharon2015cbs} to generate collision free schedules. CBS plans on the inner grid only, mapping border start (e.g. $(x,0)$) to inner start (e.g. $(x,1)$). Robots enter grid continuously from border cells; a robot finishing a job immediately receives a new randomly assigned border-cell goal ($\geq 50$ Manhattan distance away, currently not assigned as a goal to another robot). In every simulation, we simulate $T = 1{,}000$ time steps, allowing for between 14 and 15 sequential jobs for a robot.

At a random point in time, a dead zone is announced. We vary sizes, randomize location and limit position with a margin: 2 cells from each border, ensuring that escape 
routes $P_e$ lie entirely on inner cells.

While the CCW escape strategy is proven collision-free under $d_{safe}$, this separation constraint directly limits how many robots can operate simultaneously, affecting system throughput. To investigate this tradeoff, we evaluate two additional variants of the escape strategy with relaxed separation requirements, alongside the proven-safe strategy, basic DZ handling, and a full emergency stop. We compare the following dead zone handling and escaping strategies:
\begin{itemize}
    \item \textit{Emergency Stop (ES):} all robots stop in place at $t_{dz}$; jobs thereafter are lost.
    \item \textit{Basic DZ (BDZ):} robots whose remaining trajectory intersects $C_{dz}$ enter the DZ and stop; unaffected robots continue; new trajectories are being planned around the DZ as a static obstacle.
    \item \textit{EDZ$_{safe}$ ($d_{safe} = 2\vert P_e\vert +1$):} robots whose remaining trajectory intersects $C_{dz}$ execute the CCW escape; collision-freedom is guaranteed by Theorem~\ref{thm:collision_free_escape}.
    \item \textit{EDZ$_1$ ($d = \vert P_e\vert +1$):} same CCW escape logic with a relaxed safety margin. This variant is not covered by Theorem~\ref{thm:collision_free_escape} and its collision-freedom is not guaranteed.
    \item \textit{EDZ$_{0.5}$ ($d = 0.5\,\vert P_e\vert $):} further relaxed margin; collision-freedom is no longer guaranteed by Theorem~\ref{thm:collision_free_escape}.
    \item \textit{Baseline:} an identical system with no DZ or emergency stop event, to quantify the throughput impact of each strategy.
\end{itemize}
Together, $\text{EDZ}_1$ and $\text{EDZ}_{0.5}$ allow us to assess how sensitive throughput is to the choice of separation, even though their collision-freedom is not guaranteed.

In our implementation, a robot executes the escape maneuver if and only if its remaining trajectory intersects $C_{dz}$ and it is not already inside it at $t_{dz}$, corresponding to $\theta = 1$ as stated above. This choice is natural in the unit-speed discrete model, where a robot one step outside $C_{dz}$ always has sufficient time to redirect before entering the DZ.

\subsection{Experimental Setup}
A \textit{configuration} is one full simulation run defined by: (i) DZ size; (ii) maximum robot count $N$; (iii) the randomly generated job sequence; and (iv) one random DZ event $(t_{dz}, x_{dz}, y_{dz})$. The DZ announcement time $t_{dz}$ is sampled uniformly from $[T/6,\, T/3] = [166,\, 333]$ to allow sufficient warmup, and the DZ position is sampled uniformly from all valid inner positions such that at least one active robot's remaining trajectory intersects $C_{dz}$. We also set $\delta_{max} = \vert P_e\vert $ throughout, the least restrictive value permitted by Theorem~\ref{thm:collision_free_escape}.
When a dead zone is announced at $t_{dz}$, the fleet $R$ is partitioned into two mutually exclusive sets: operating robots $R_{op} = \{r_i \in R \mid \alpha_{t_{dz}}(r_i) \notin C_{dz}\}$ and dead robots $R_{dz} = \{r_i \in R \mid \alpha_{t_{dz}}(r_i) \in C_{dz}\}$. Dead robots receive no further job assignments; operating robots continue normally.
After $t_{dz}$, newly assigned jobs plan around the DZ as a permanent obstacle. 
The following variables are swept independently:
\begin{itemize}
    \item Maximum number of robots: $N \in \{5, 10, 15, 20, 25\}$
    \item DZ size: $1{\times}1$, $1{\times}2$, $2{\times}2$, $2{\times}3$, $3{\times}3$
    \item Approach: ES, BDZ, EDZ$_{safe}$, EDZ$_1$, EDZ$_{0.5}$, Baseline   
\end{itemize}
Each combination of approach, $N$, and DZ size is evaluated over 30 independent seeds. Results are reported as mean $\pm$ standard deviation. Experiments were also conducted on larger grids ($50{\times}100$, $100{\times}100$) with up to $N = 100$ robots and DZ sizes up to $5{\times}5$. As no qualitatively new trends emerged, we report results for the $50{\times}50$ grid only.

\begin{figure}[t!]
    \centering
    \begin{subfigure}[t]{0.48\textwidth}
        \includegraphics[width=\textwidth]{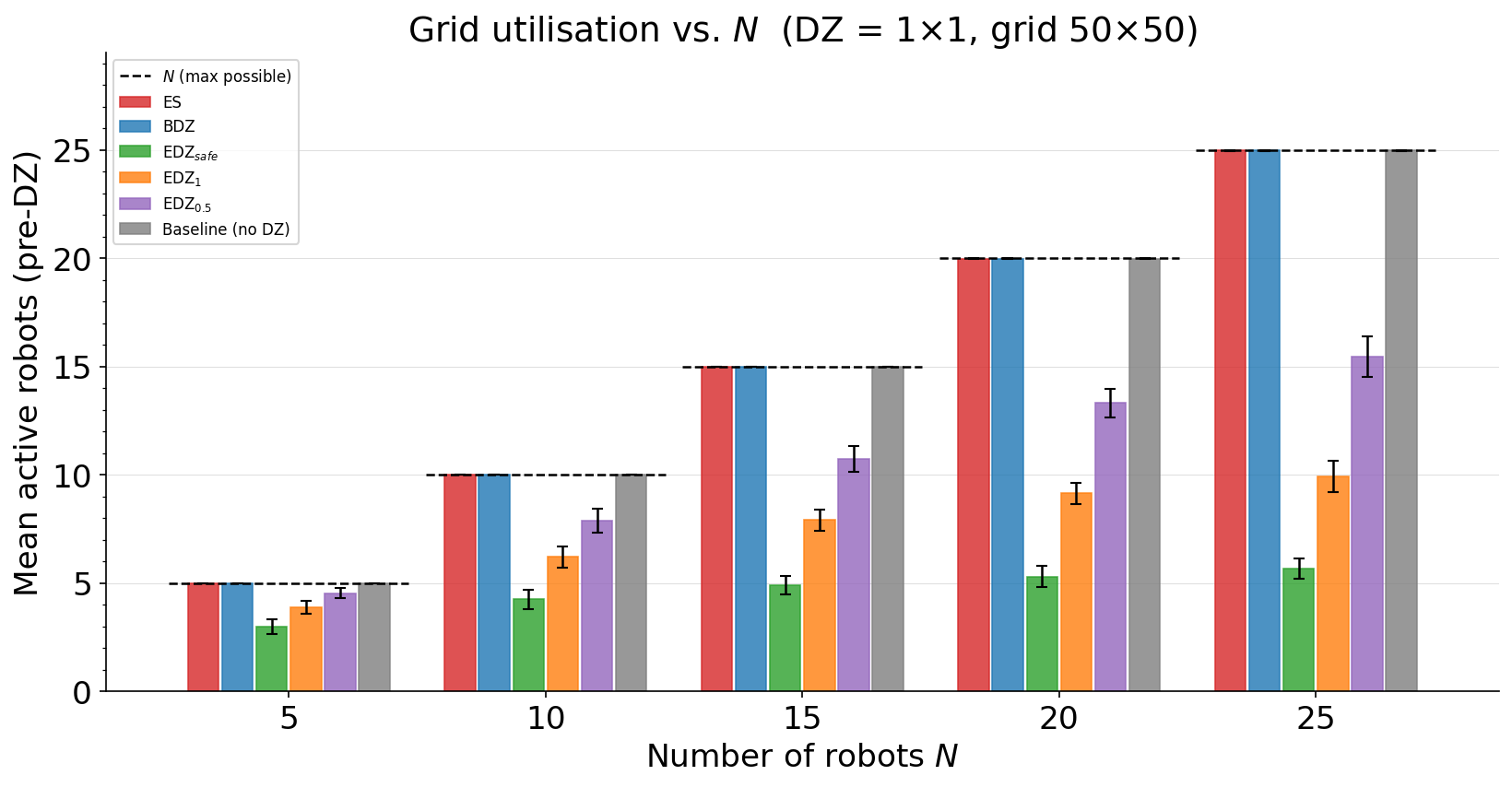}
        \caption{Mean active robots before $t_{dz}$. ES and BDZ utilize the full fleet; EDZ$_{safe}$ saturates at $\approx 5$-$6$ robots.
        Relaxed variants EDZ$_1$ and EDZ$_{0.5}$ achieve progressively higher utilization at the cost safety guarantees. Dashed lines mark fleet size $N$.}
        \label{fig:util_vs_N}
    \end{subfigure}
    \hfill
    \begin{subfigure}[t]{0.48\textwidth}
        \includegraphics[width=\textwidth]{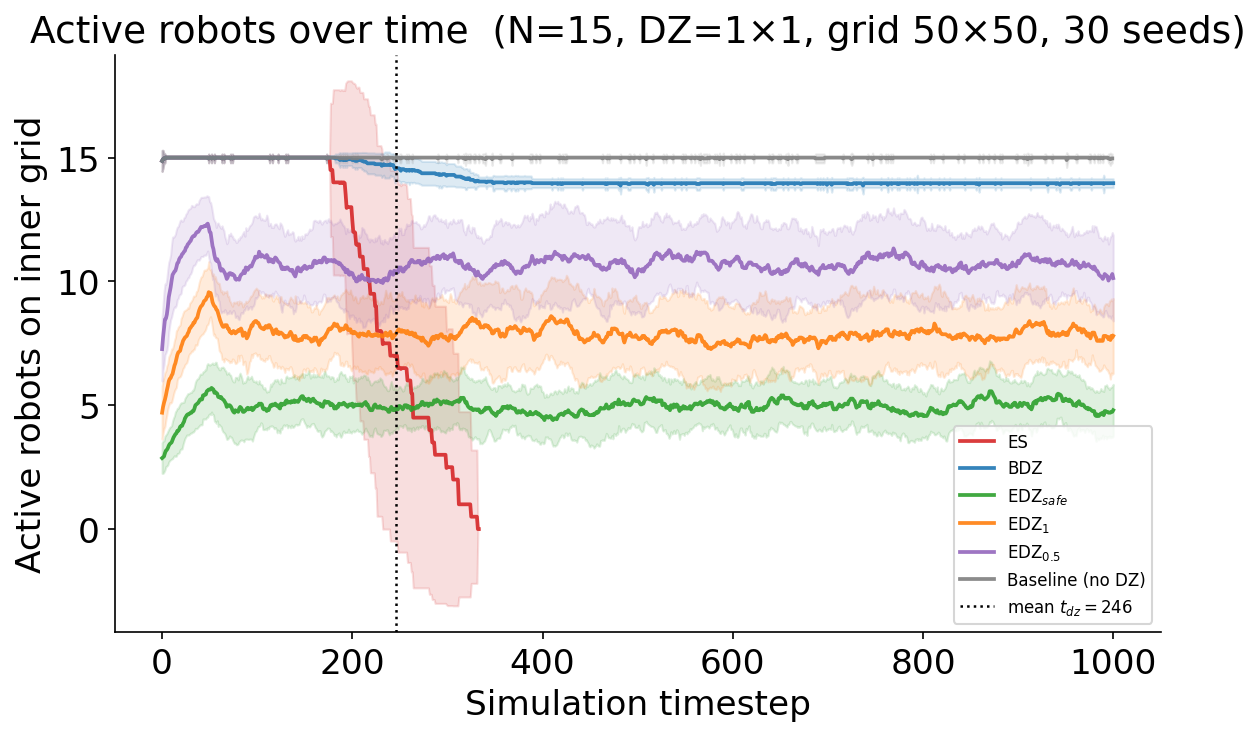}
        \caption{Active robots on the inner grid over time ($N=15$). ES drops to zero at $t_{dz}$; BDZ loses the robots whose paths crossed DZ; all EDZ variants maintain pre-DZ throughput with no drop at $t_{dz}$. EDZ throughput decreases with stricter distance $d$ or $d_{safe}$. Dotted line marks mean $t_{dz} = 246$.}
        \label{fig:active_robots_over_time}
    \end{subfigure}

    \vspace{0.5em}

    \begin{subfigure}[t]{0.48\textwidth}
        \includegraphics[width=\textwidth]{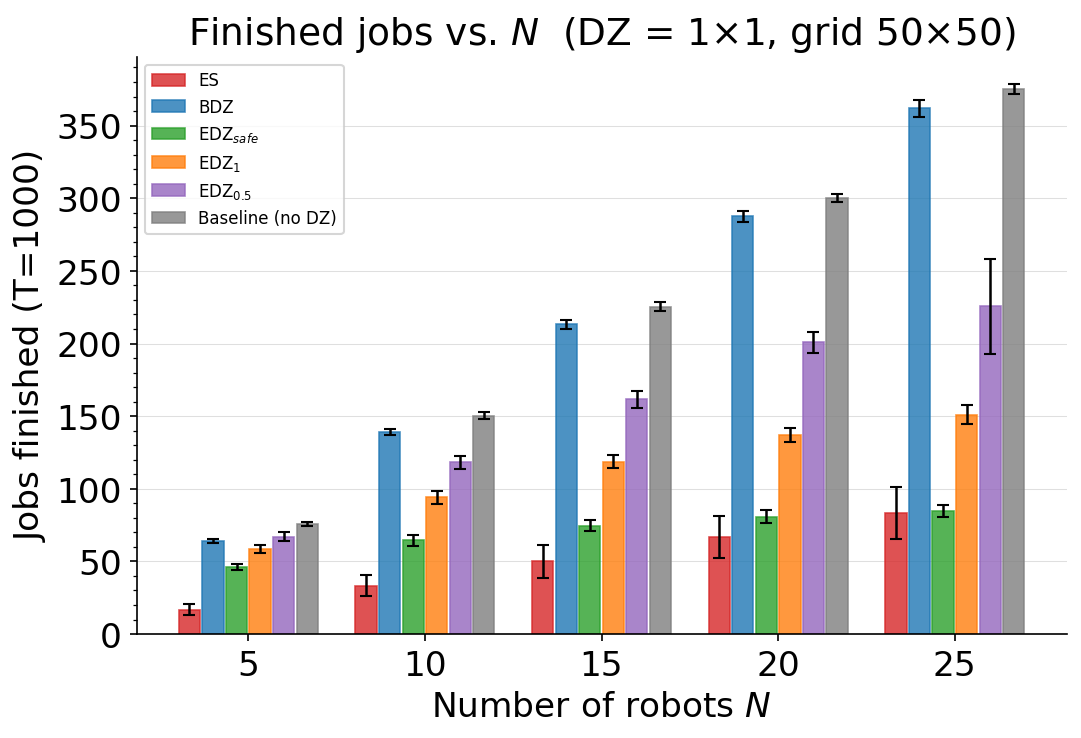}
        \caption{Total finished jobs over $T=1000$ steps vs. fleet size $N$. BDZ achieves near-baseline throughput across all $N$. EDZ$_{safe}$ saturates at $\approx 75$--$85$ jobs for $N \geq 15$; relaxed variants EDZ$_1$ and EDZ$_{0.5}$ recover a significant portion of the throughput gap.}
        \label{fig:jobs_vs_N}
    \end{subfigure}
    \hfill
    \begin{subfigure}[t]{0.48\textwidth}
        \includegraphics[width=\textwidth]{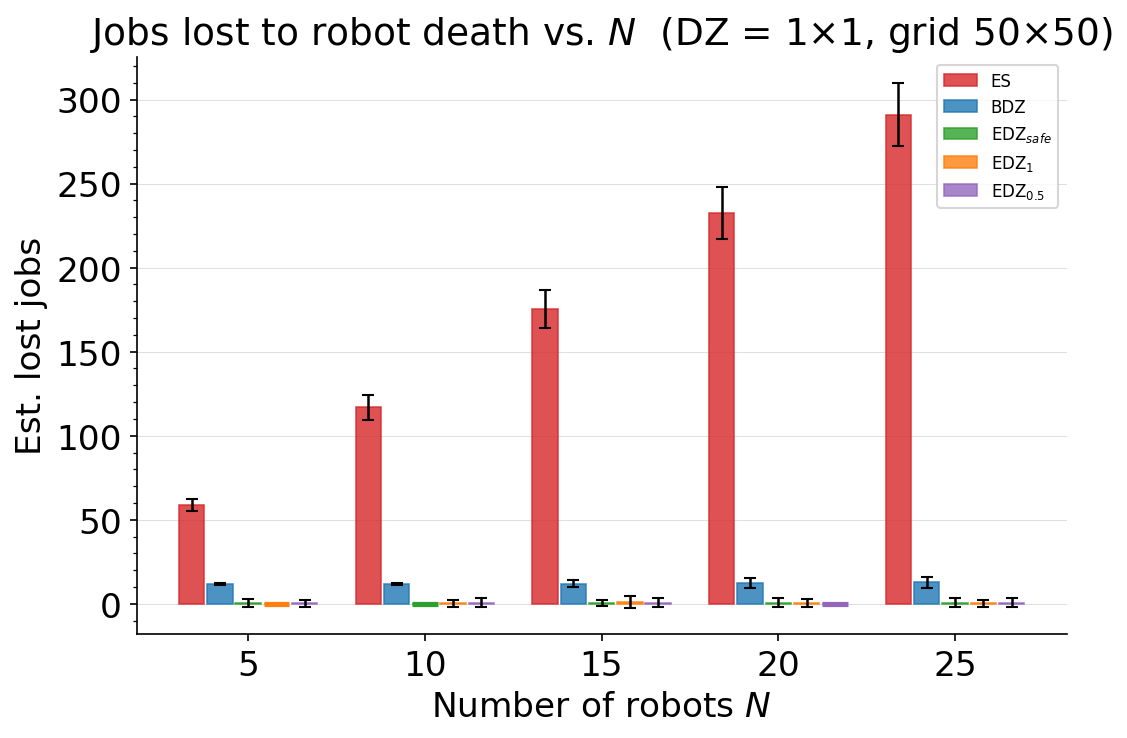}
        \caption{Mean jobs lost due to robot death vs.\ fleet size $N$. ES loses all post-$t_{dz}$ jobs, BDZ loses $\approx 12$--$16$ jobs regardless of $N$ (typically one robot is captured per run). All EDZ variants lose near-zero jobs across all $N$.
        }
        \label{fig:lost_vs_N}
    \end{subfigure}
    
    \caption{Simulation results: grid $50{\times}50$, $T=1000$, $30$ seeds per configuration. Fleet size sweep (DZ $1{\times}1$, fixed). All error bars show mean $\pm$ std.}
    \label{fig:max_number_robots}
\end{figure}

\begin{figure}[t!]
    \centering
    \begin{subfigure}[t]{0.48\textwidth}
        \includegraphics[width=\textwidth]{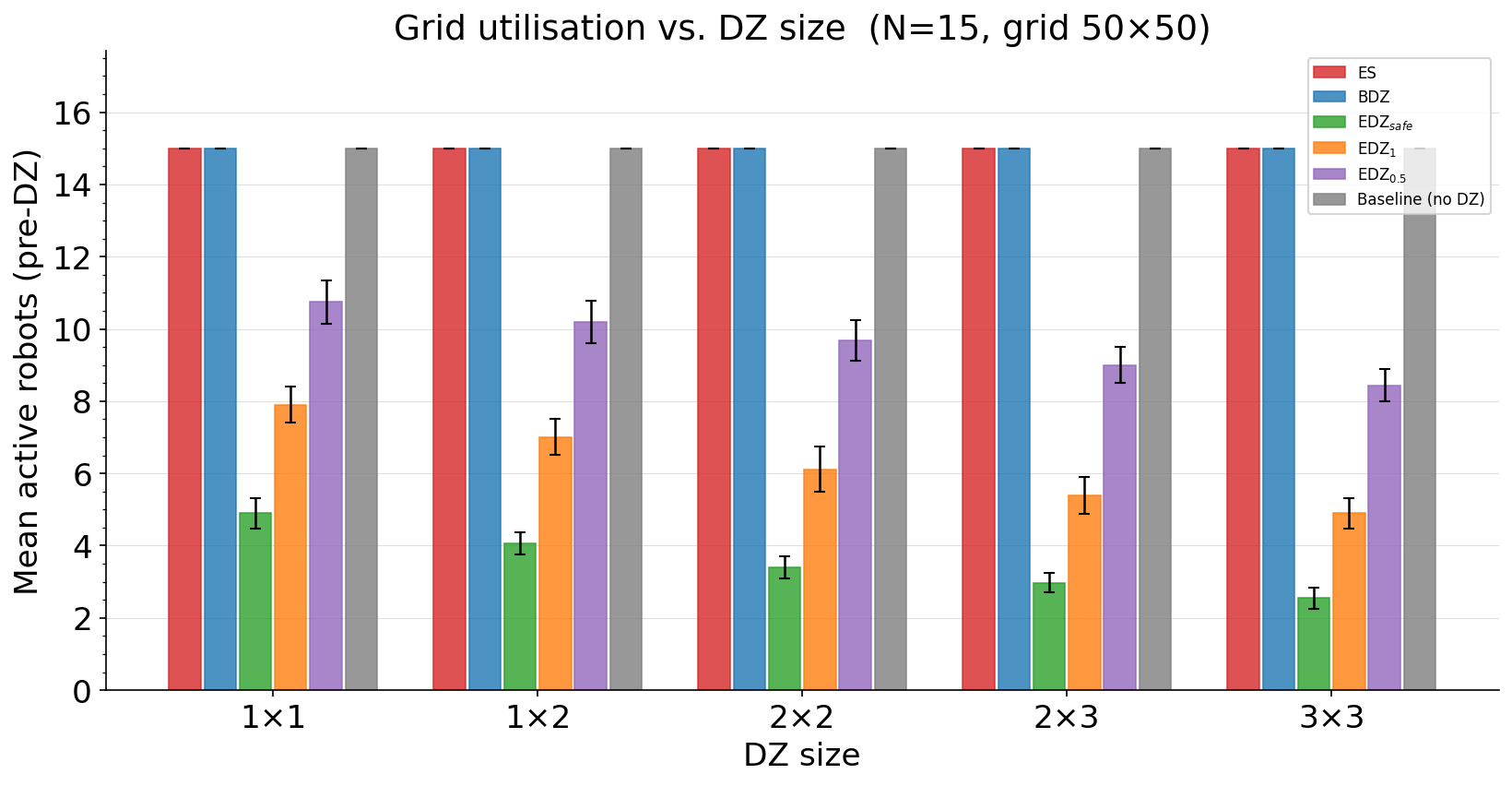}
        \caption{Mean active robots before $t_{dz}$ vs.\ DZ size. ES and BDZ maintain full fleet utilization ($\bar{a}=15$) independent of DZ size. All EDZ variants decrease monotonically with DZ size as $d_{safe}$ or $d$ grow; EDZ$_{safe}$ drops from $\approx 5$ to $\approx 2.5$, while EDZ$_{0.5}$ retains $\approx 8.5$ robots even at the largest DZ.}
        \label{fig:util_vs_dz}
    \end{subfigure}
    \hfill
    \begin{subfigure}[t]{0.48\textwidth}
        \includegraphics[width=\textwidth]{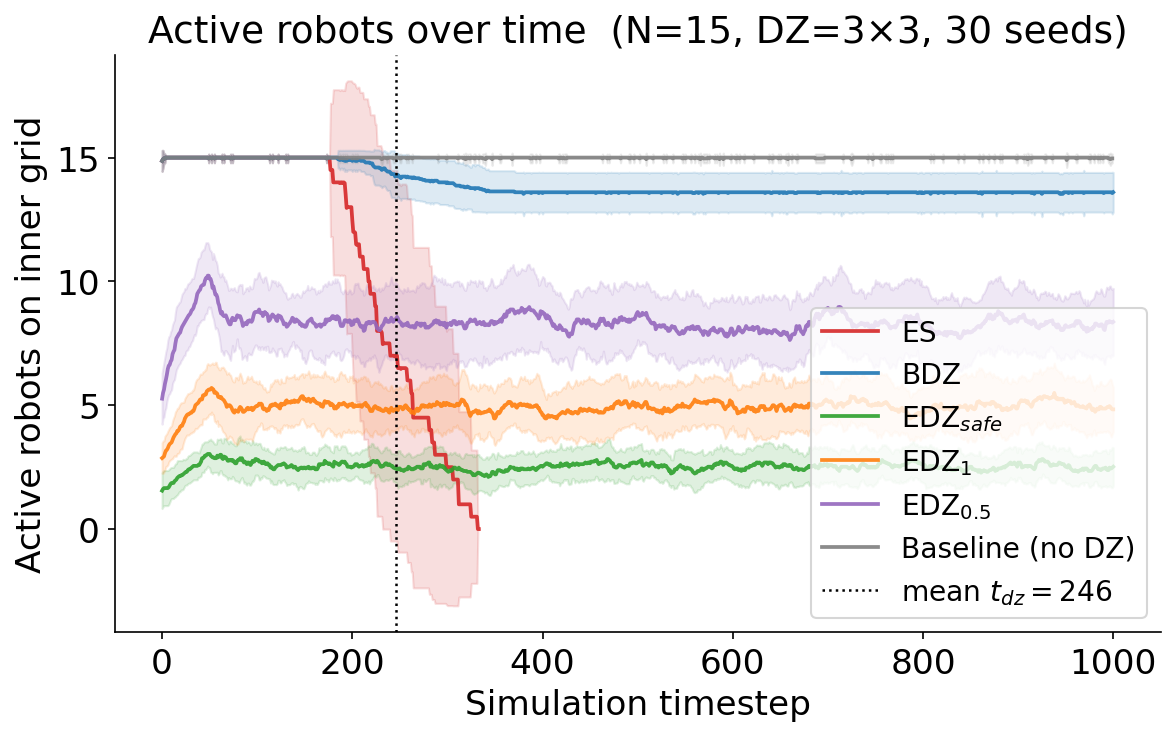}
        \caption{Active robots over time for DZ $3{\times}3$ (for comparison see Fig.~\ref{fig:active_robots_over_time}). ES drops to zero at $t_{dz}$; BDZ drops by $\approx 1$ robot and stabilizes. The EDZ ceiling drops visibly between the two panels for all variants, while BDZ and ES are unaffected by DZ size.}
        \label{fig:active_robots_over_time_large_dz}
    \end{subfigure}

    \vspace{0.5em}

    \begin{subfigure}[t]{0.48\textwidth}
        \includegraphics[width=\textwidth]{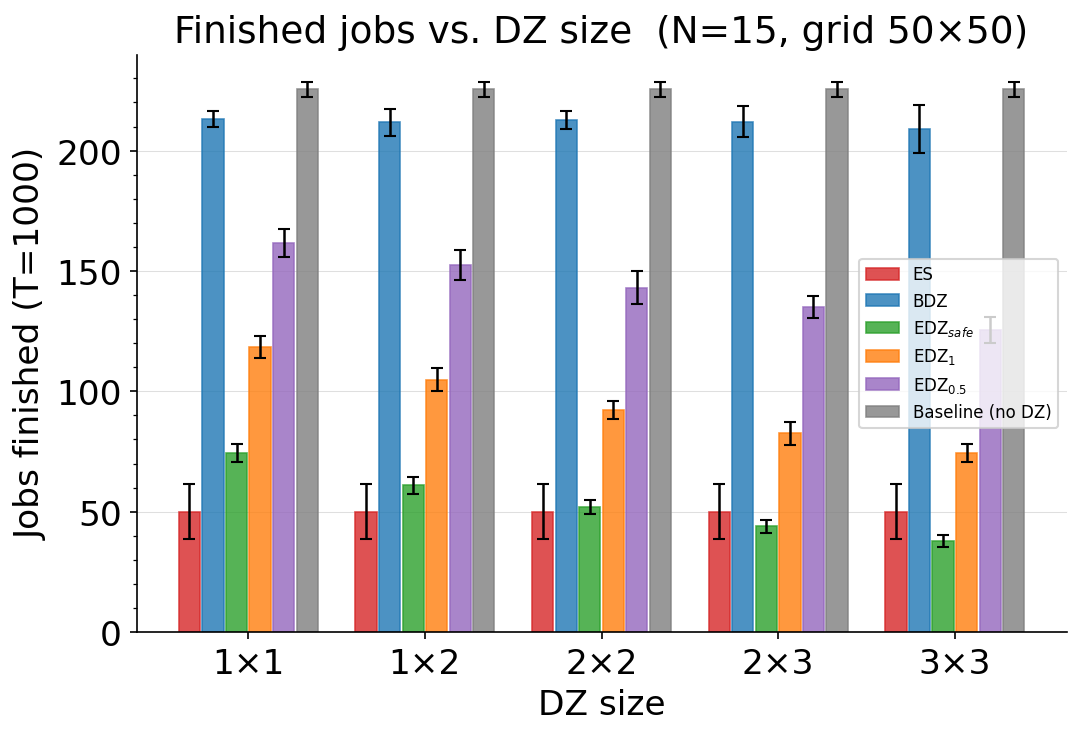}
        \caption{Total finished jobs vs. DZ size. ES is flat ($\approx 50$ jobs) and BDZ near-constant ($\approx 210$ jobs), both independent of DZ geometry. EDZ$_{safe}$ decreases from $74$ to $38$ jobs; EDZ$_1$ from $120$ to $75$; EDZ$_{0.5}$ from $160$ to $125$, all driven by the shrinking active fleet as $d_{safe}$ or $d$ grow.}
        \label{fig:jobs_vs_dz}
    \end{subfigure}
    \hfill
    \begin{subfigure}[t]{0.48\textwidth}
        \includegraphics[width=\textwidth]{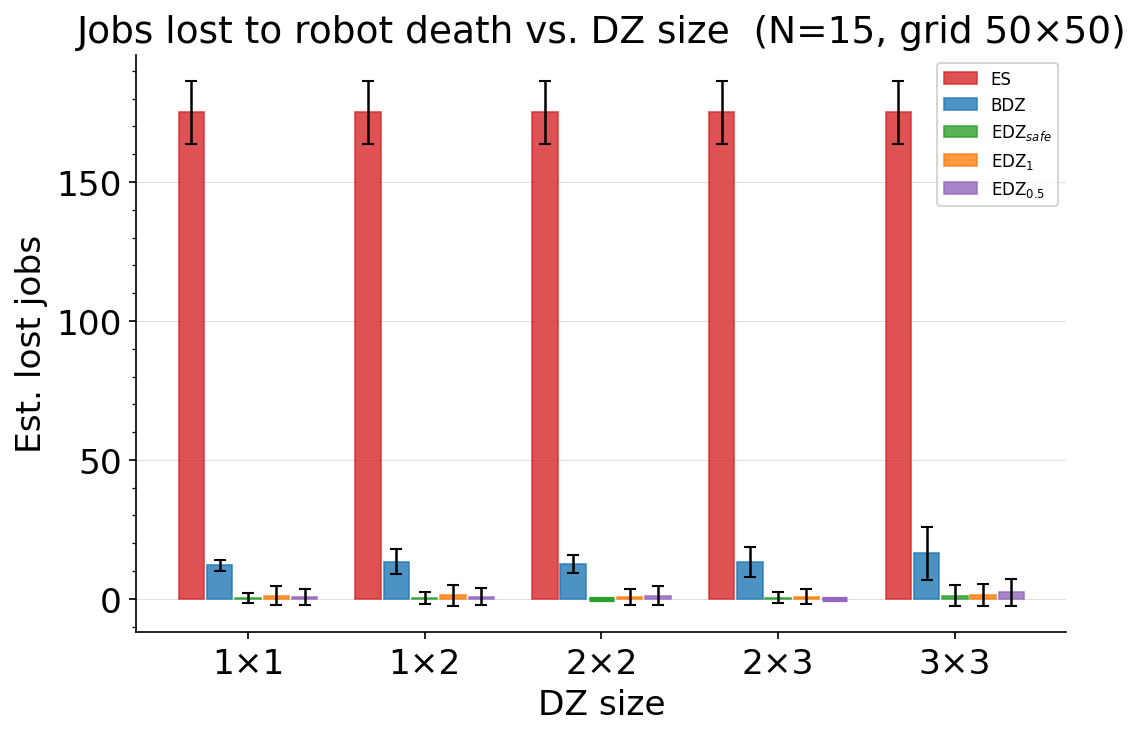}
        \caption{Mean jobs lost due to robot death vs. DZ size. ES loses $\approx 175$ jobs independent of DZ geometry. BDZ grows marginally from $\approx 10$ to $\approx 16$ jobs. All EDZ variants remain near zero across all DZ sizes, confirming robustness of the escape strategy independent of margin choice.}
        \label{fig:lost_vs_dz}
    \end{subfigure}

    \caption{Simulation results: grid $50{\times}50$, $T=1000$, $30$ seeds per configuration. DZ size sweep ($N=15$, fixed).
    All error bars show mean $\pm$ std.}
    \label{fig:DZ_size}
\end{figure}

\subsection{Experimental Results}
We collect a comprehensive set of metrics per run, including collision rate, mean active robots, finished jobs, lost jobs, robots dead in DZ, DZ exposure count, and delay bound validation. In this paper we focus on the metrics that best characterize the tradeoff between safety and throughput. \textbf{Mean active robots} ($\bar{a}$) measures fleet utilization before and after $t_{dz}$. \textbf{Finished jobs} counts total completed jobs over $T = 1{,}000$ steps. \textbf{Lost jobs} counts jobs not completed due to robot death in $C_{dz}$ (or because of emergency stop). \textbf{Collision rate} reports the fraction of runs with at least one robot-robot collision, serving as a sanity check for proven-safe approaches and a violation indicator for relaxed variants. 
For each run we also verified whether any robot's escape deviation exceeded $\delta_{max}$, which would indicate the need to escape DZ more than once; such robots are instead treated as dead in DZ by the fallback rule (Section~\ref{sec:escaping}). In no run did any robot require more than one escape of the same dead zone, showing that the spatio-temporal detours imposed by $d_{safe}$ did not produce trajectories crossing $C_{dz}$ multiple times under our experimental configurations.
As an additional sanity check, we verify that the delay bound $\delta_i \leq \vert P_e \vert$ holds for all escaping robots in EDZ$_{safe}$ runs, confirming Corollary~\ref{thm:collision_free_delay}.

We present results for two sweeps: fleet size scaling with fixed DZ size, and DZ size scaling with fixed fleet size. Both sweeps use the $50{\times}50$ grid with $T = 1{,}000$ time steps and 30 independent seeds per configuration.

\noindent\textbf{Scaling with Fleet Size.}
We fix DZ size at $1{\times}1$ ($\vert P_e\vert  = 8$, $d_{safe} = 17$) and sweep $N \in \{5, 10, 15, 20, 25\}$ over $30$ independent seeds each. The delay bound was satisfied in $100\%$ of EDZ$_{safe}$ runs across all configurations. Qualitative trends are consistent across all 30 seeds.
Fig.~\ref{fig:max_number_robots} shows the results of the sweep. Before the DZ announcement, ES and BDZ scale linearly with $N$, utilizing the full fleet, while EDZ$_{safe}$ saturates early due to the $d_{safe}=17$ spacing constraint limiting simultaneous occupancy of the inner grid regardless of how many robots are queued at the border; the relaxed variants EDZ$_1$ and EDZ$_{0.5}$ progressively recover utilization (Fig.~\ref{fig:util_vs_N}). After the dead zone is announced, the strategies diverge sharply (Fig.~\ref{fig:active_robots_over_time}): ES loses all jobs dispatched after $t_{dz}$, while BDZ continues with a marginally reduced fleet near baseline throughput. All EDZ variants maintain their pre-DZ ceiling with no drop at $t_{dz}$ and lose near-zero jobs across all $N$ (Figs.~\ref{fig:jobs_vs_N},~\ref{fig:lost_vs_N}), confirming that the escape strategy preserves all but the robot already inside $C_{dz}$ at $t_{dz}$; EDZ$_1$ and EDZ$_{0.5}$ partially close the throughput gap to BDZ. Notably, EDZ$_{0.5}$ registers a collision rate of $3\%$ at $N=25$ with DZ size $1\times1$, confirming that the relaxed margin violates the conditions of Theorem~\ref{thm:collision_free_escape} and collisions do occur in practice, while EDZ$_{safe}$ and EDZ$_1$ maintain $0\%$ collision rate across all configurations.

\noindent\textbf{Scaling with DZ Size.}
We fix $N = 15$ and sweep DZ sizes $1{\times}1$, $1{\times}2$, $2{\times}2$, $2{\times}3$, $3{\times}3$, with $d_{safe}$ scaling accordingly from $17$ to $33$ for EDZ$_{safe}$.
Fig.~\ref{fig:DZ_size} shows the results. ES and BDZ are entirely unaffected by DZ size in terms of fleet utilization (Fig.~\ref{fig:util_vs_dz}), while all EDZ variants decrease monotonically as the DZ grows, reflecting the increasing $d_{safe}$ or $d$ requirement (compare Fig.~\ref{fig:active_robots_over_time} and Fig.~\ref{fig:active_robots_over_time_large_dz}). This independence from dead zone geometry also holds for finished and lost jobs in ES and BDZ (Figs.~\ref{fig:jobs_vs_dz},~\ref{fig:lost_vs_dz}), while for EDZ variants the growing distance constraint $d$ or $d_{safe}$ directly shrinks the active fleet and with it the number of finished jobs. Despite this throughput reduction, all EDZ variants remain near zero in robot losses across all DZ sizes, confirming the robustness of the escape strategy regardless of the choice of margin. 

\subsection{Discussion}
Across all configurations, EDZ$_{safe}$ maintained a $0\%$ collision rate, empirically confirming Theorem~\ref{thm:collision_free_escape}. The results also reveal a clear tradeoff between safety guarantees and throughput. ES is the worst performer in all metrics, confirming that a full fleet halt is unnecessarily disruptive for a localized anomaly. BDZ achieves near-baseline throughput, losing on average $\approx 1-2$ robots per DZ event regardless of fleet size or DZ geometry, but permanently sacrifices those robots. All EDZ variants lose at most $1$ robot per run (the robot inside $\mathcal{C}_{dz}$ at $t_{dz}$), yet maintain their pre-DZ active ceiling (Fig.~2c); the border waiting pool dispatches a replacement within one cycle, leaving the throughput cost driven entirely by $d_{safe}$.

The relaxed variants EDZ$_1$ and EDZ$_{0.5}$ demonstrate that the throughput gap between BDZ and proven-safe EDZ$_{safe}$ is largely an artifact of the conservative bound $d_{safe} = 2\vert P_e\vert +1$: EDZ$_{0.5}$ recovers close to BDZ throughput in finished jobs while still preserving all robots from entering DZ. However, EDZ$_{0.5}$ incurs a $3\%$ collision rate at $N=25$ with DZ size $1{\times}1$. Since no safety guarantee is established for $d = 0.5\vert P_e\vert $, these collisions empirically confirm that operating below the proven-safe separation $d_{safe} = 2\vert P_e\vert +1$ is insufficient to prevent collisions in practice. EDZ$_1$ maintains $0\%$ collision rate across all tested configurations, suggesting it may represent a practically useful middle ground, though its formal safety properties remain an open question. These results suggest a broader design space of escape strategies trading proven safety margins for throughput, of which EDZ$_{safe}$ and BDZ represent two specific instances.

The delay bound $\delta_i \leq \vert P_e \vert$ was satisfied in $100\%$ of EDZ$_{safe}$ runs across all configurations and DZ sizes, confirming Corollary~\ref{thm:collision_free_delay}. Both BDZ and EDZ$_{safe}$ are conservative in different ways: BDZ sacrifices affected robots, while EDZ$_{safe}$ sacrifices fleet density to guarantee their safety. The choice depends on operational priorities: BDZ is preferable when robot losses are acceptable and throughput is critical; EDZ$_{safe}$ is preferable when robot preservation is required. Future work should investigate tighter analytical bounds or problem-specific safety margins, as the EDZ$_1$ results suggest that collision-freedom may be achievable with significantly less conservative spacing.

Several modeling choices limit external validity: the discrete unit-speed grid abstraction, the assumption of negligible planning and communication latency ($\theta=1$), the restriction to a single rectangular dead zone, and the use of synthetic job sequences rather than real warehouse traces. This paper focuses on establishing the dead zone concept and its safety guarantees in the simplest setting sufficient to validate them. Extending the model along each of these dimensions is natural future work. However, note that our model extends to multiple consequent dead zones, given each one is announced after the latest robot affected by the previous one arrived at it's destination. Each affected escaping robot would then receive a new job and with that a new trajectory that again satisfies $d_{safe}$. The previous dead zone, containing each affected dead robot, will then be treated as a static obstacle. Dynamic obstacles unrelated to the dead zone (e.g. humans, forklifts) present a similar, orthogonal extension to our model. Reevaluation against real world warehouse data and physical robot kinematics would likewise be needed for deployment.

\section{Related Works}
\label{sec:related}
To position this work, we briefly survey the relevant literature on multi-agent path finding, fault tolerance, and safety in AGV fleets. Multi-agent path finding (MAPF) is the foundational problem of planning collision-free paths for a fleet of agents on a graph or grid~\cite{stern2019definitions, sharon2015cbs, stern2019mapf}. Offline centralized planning is one of the standard approaches in AGV-based systems~\cite{fragapane2021amr, okumura2023otimapp, okumura2021otimapp, chiddarwar2011multiAgent}, as it avoids runtime overhead and enables formal reasoning about system behavior. Numerous algorithms address the MAPF problem, ranging from Conflict-Based Search (CBS), a two-level search algorithm that resolves pairwise conflicts by iteratively adding constraints~\cite{sharon2015cbs}, to encodings based on Boolean Satisfiability (SAT) and Satisfiability Modulo Theories (SMT)~\cite{surynek2016efficientSAT, surynek2019unifying, surynek2020bounded}, as well as pickup-and-delivery extensions~\cite{liu2019taskPath} and warehouse applications~\cite{varambally2022mapf, salzman2020challenges, zhang2024planningExecution}. However, MAPF assumes normal operation throughout and does not address what happens when part of the operational area becomes unavailable or unsafe.

Robustness to delays and agent failures has been studied in fault-tolerant MAPF. $k$-robust planning~\cite{atzmon2017krobust, chen2021symmetry} guarantees collision-freedom despite up to $k$ delay steps. 
Okumura and Tixeuil~\cite{okumura2023faultTolerant} study crash faults in offline MAPF, where a crashed agent becomes a permanent point obstacle and healthy agents switch to a precomputed backup path at runtime. While closely related, their approach requires precomputing alternative plans, one for every possible crash pattern. Our dead zone instead defines a spatial area of unknown state (containing any anomaly, not only a robot crash), fenced off from ongoing operation; the corresponding handling with escaping requires only that robots keep a minimum distance from each other in their original trajectories, a single geometric rule checked once during planning. 
These differences make a direct empirical comparison uninformative; we instead evaluate against full emergency stop and Basic DZ Handling as the relevant reference points (Section~\ref{sec:evaluation}), representing current practice and the simplest sound alternative, respectively.

In AGV fleet management, safety standards mandate a system-wide emergency stop as the primary response to anomalies~\cite{iso3691_2023}. Prior work has shown that emergency stops cause secondary collisions~\cite{ogorelysheva2023mitigating}, and that localized fault handling is an open challenge~\cite{ogorelysheva2025anomaly, lienert2019failure, rodriguez2026disruptions, morais2025rfm}. 
Formal collision and deadlock avoidance for AGV fleets during normal operation has been extensively studied~\cite{li2016collision, pratissoli2023hierarchical, lee2023geometric, chung2024deadlock, zhao2021sparezone, roselli2022compositional, witczak2019faulttolerant, desai2019safety}, including mechanized verification for a single ground robot amid adversarial obstacles~\cite{mitsch2017formal} and compositional verification for coordinating robot teams~\cite{majumdar2019motion}. However, these approaches focus on coordination under normal conditions as well as guaranteeing safety during ordinary task execution rather than spatial isolation of anomalies. Robot evacuation from spatially spreading hazards such as fire provides a conceptually related scenario~\cite{shih2024selfevacuation, guo2024agvfire}, but relies on online replanning and provides no formal collision-freedom guarantees for the remaining fleet. Industry has begun addressing localized fault handling in practice~\cite{ssischaefer2026fastbots}, confirming real-world demand, yet no formal treatment exists.

\section{Conclusion}
\label{sec:conclusion}
\vspace{-1px}
We introduced dead zones, a localized isolation for anomalies, as an additional fail-operational mechanism for the cases where a system-wide emergency stop is not strictly necessary, and defined two handling strategies for it: a basic one and one with escaping. Experimental evaluation against the current standard, an emergency stop, shows that basic dead zone handling reduces fleet downtime at the cost of losing affected robots, while dead zone handling with counterclockwise escaping saves them entirely --- with formal collision-freedom and bounded delay guarantees --- at the cost of a throughput penalty due to the minimum pairwise separation required between robots. This result constitutes a first step towards truly fault-tolerant AGV fleet operation, particularly critical in resource-constrained environments where losing a robot is not an option and a system-wide stop is not viable.
Future work will investigate more advanced escape strategies, including constraint programming and SMT/SAT-based approaches~\cite{surynek2019unifying, surynek2020bounded}, aiming for formal safety guarantees with less conservative robot separation requirements while reducing detour and delay overhead. 
Evaluations will be extended to more complex environments, including narrow corridors and other topological edge cases that stress-test the escape logic.

\section*{Acknowledgments}
This work has been funded by the Federal Ministry of Research, Technology and Space (BMFTR) via the 6GEM+ transfer hub under funding references 16KIS2412. The authors would also like to thank Mario Günzel for his valuable reviews, comments, and suggestions, which helped improve the quality of this paper.

\section*{AI Usage}
AI (Claude, Anthropic) was used for spell-checking, paraphrasing, and translation assistance during the writing of this paper. All scientific content --- including the dead zone model, handling strategies, definitions, theorems, proofs, literature research, and interpretation of the results --- was conducted without AI assistance. In the evaluation, implementation code was written with AI assistance; the overall architecture, metrics, and experimental design originate entirely from the authors, and all AI-written code was thoroughly reviewed and tested against the intended behavior before use.

\nocite{*}
\bibliographystyle{eptcs}
\bibliography{bibliography}
\end{document}